\documentclass{article}

\usepackage{arxiv}

\usepackage[utf8]{inputenc} 
\usepackage[T1]{fontenc}    
\usepackage{hyperref}       
\usepackage{url}            
\usepackage{booktabs}       
\usepackage{amsfonts}       
\usepackage{nicefrac}       
\usepackage{microtype}      
\usepackage{lipsum}
\usepackage{graphicx}

\usepackage[utf8]{inputenc}
\usepackage[american]{babel}
\usepackage[babel=true]{csquotes} 
\usepackage{graphicx}
\usepackage{amsfonts,amsmath,mathrsfs,amssymb,bm}
\usepackage{booktabs} 
\usepackage{threeparttable} 
\usepackage{algorithm}
\usepackage[inkscapelatex=false]{svg}
\usepackage{tikz}
\usetikzlibrary{arrows.meta,positioning,calc}
\usetikzlibrary{shapes.geometric, arrows.meta, positioning}
\usepackage{algorithmicx}
\usepackage{algpseudocode}
\usepackage{listings}
\usepackage{enumitem}
\usepackage{chngcntr}
\usepackage{subcaption}
\usepackage{diagbox}
\usepackage{comment}
\usepackage{xcolor}
\usepackage[justification=raggedright, singlelinecheck=false]{caption}
\usepackage{url}

\newtheorem{defi}{Definition}
\newtheorem{thm}{Theorem}
\newtheorem{proof}{Proof}

\DeclareMathOperator{\spline}{spline}
\DeclareMathOperator{\silu}{SiLU}
\algblock{Input}{EndInput}
\algnotext{EndInput}
\algblock{Output}{EndOutput}
\algnotext{EndOutput}

\title{LTBs-KAN: Linear-Time B-splines Kolmogorov-Arnold Networks}

\author{
 Eduardo Said Merin-Martinez \\
  Cinvestav, Unidad Guadalajara\\
  Av. del Bosque 1145, El Bajio\\
  Zapopan, Jalisco, 45017, Mexico \\
  \texttt{eduardo.merin@cinvestav.mx} \\
   \And
 Andres Mendez-Vazquez \\
  Cinvestav, Unidad Guadalajara\\
  Av. del Bosque 1145, El Bajio\\
  Zapopan, Jalisco, 45017, Mexico \\
  \texttt{andres.mendez@cinvestav.mx} \\
  \And
  Eduardo Rodriguez-Tello\\
  Cinvestav, Unidad Tamaulipas\\
  Km. 5.5 Carretera Victoria - Soto La Marina\\
  Victoria, 87139, Tamaulipas, Mexico \\
  \texttt{ertello@cinvestav.mx} \\
}

\date{}

\begin{document}
\maketitle

\begin{abstract}
Kolmogorov-Arnold Networks (KANs) are a recent neural network architecture offering an alternative to Multilayer Perceptrons (MLPs) with improved explainability and expressibility. However, KANs are significantly slower than MLPs due to the recursive nature of B-spline function computations, limiting their application.
This work addresses these issues by proposing a novel base-spline Linear-Time B-splines Kolmogorov-Arnold Network (LTBs-KAN) with linear complexity. Unlike previous methods that rely on the Boor-Mansfield-Cox spline algorithm or other computationally intensive mathematical functions, our approach significantly reduces the computational burden. Additionally, we further reduce model's parameter through product-of-sums matrix factorization in the forward pass without sacrificing performance. Experiments on MNIST, Fashion-MNIST and CIFAR-10 demonstrate that LTBs-KAN achieves good time complexity and parameter reduction, when used as building architectural blocks, compared to other KAN implementations. 
\end{abstract}

\keywords{Kolmogorov Arnold Networks \and B-splines \and Multi-Layer Perceptron \and Convolutional Neural Network.}

\section{Introduction}\label{Introduction}

The recent novel proposal of KANs \cite{liu2024kan} introduces an innovative modification to traditional neural networks by employing learnable functions on the edges of the graph rather than using fixed activation functions at the nodes by Sigmoid Linear Units (SiLU) and spline functions \cite{liu2024kan,Boor1978Splines}. This flexibility enables interpretability and adaptability given the locality of the basic spline. Thus, this locality incorporates adaptability into the new KAN based architecture, improving both accuracy and computational efficiency \cite{liu2024kan}. However, one of the main problems of the KANs lie in its slow processing of the data because of the splines, making it $10$x slower than MLPs, given the same number of parameters \cite{liu2024kan}. Nevertheless, there have been attempts to reduce computational complexity in the KAN architecture \cite{Ta2025PRKAN,Athanasios2024FasterKAN,gottlieb1938polynomials}. 

One of the first attempts to improve KAN performance is the Efficient KAN \cite{efficientkan},
a PyTorch implementation of Kolmogorov-Arnold Network (KAN). Here, the authors propose approximations of the B-spline using a switch activation function that approximates $3^{rd}$-order B-splines used in the original KAN. In addition, it replaces the incompatible $L1$ regularization applied to input samples with the $L1$ penalty on the model weights. Finally, it introduces learnable scaling factors for activation functions and adopting Kaiming uniform initialization \cite{kaiming2015Rectifiers} for both the base weight and spline scaling matrices.

Following this initial attempt, a combination of B-splines and
Radial Basis Functions (BSRBF-KAN) is proposed \cite{Ta2025PRKAN} to fit input vectors during data training.  For this, an RBF network with $N$ centers is used to group samples to a specific basis. Thus, by applying linear transformations, it is possible to align a series of $3^{rd}$-order B-spline bases to Gaussian radial basis. Obtaining a training time two an half that of the classic MLP on the MNIST dataset \cite{LeCunn1998MNIST}, Table \ref{tab:MNIST}.

Furthermore, using the previous ideas, FastKAN \cite{FastKAN2024}
has significantly accelerated model computation to a speed comparable to MLPs, but at the cost of performance \cite{li2024kolmogorovarnold}, Table \ref{tab:MNIST} and Table \ref{tab:Fashion_MNIST}. This is achieved by approximating the $3^{rd}$-order B-spline basis using Radial Basis Functions (RBFs) with Gaussian kernels. Also, layer normalization are used to prevent the inputs shifting away from the domain of the RBFs \cite{li2024kolmogorovarnold}.

Following this, the Gottlieb-KAN \cite{Seydi2024ExploringPolynomial} uses Gottlieb polynomials \cite{gottlieb1938polynomials} which are a special family of polynomials that arises in the study of Bernoulli numbers \cite{gottlieb1938polynomials,Seydi2024ExploringPolynomial}. And although Gottlieb-KAN achieves the best performance across other substitutes for splines, it degrades compared to the MLP architecture, Table \ref{tab:MNIST} and \ref{tab:Fashion_MNIST}.

Finally, we have FasterKAN \cite{Athanasios2024FasterKAN} where FastKAN is used together with Reflective Switch Activation Functions (RSWAF). These functions can accurately approximate a $3^{rd}$-order B-spline basis with linear scaling \cite{Ta2025PRKAN} with a modification for reflectional symmetry. Despite these advances, substantial challenges remain in optimizing Kolmogorov-Arnold Networks (KANs) for practical applications. This gap highlights a significant opportunity to develop algorithms that further enhance the efficiency of KANs. 

Our work addresses these issues by proposing the Linear-Time B-splines Kolmogorov-Arnold Network (LTBs-KAN) model, which efficiently computes the Bernstein–Bézier coefficients of B-spline basis functions. It achieves optimal complexity $O(L d^2(m + n))$  for coincident knots for the forward pass, where $m$ is the order of the spline, $n$ is the number of nodes in the grid, $L$ is the number of layers, and $d$ is the input dimension. For typical configurations using cubic splines and fixed layer counts, the complexity is effectively linear in $n$, i.e., $O(n)$. Further parameter reduction is achieved via product-of-sums matrix factorization in the forward pass, Fig. \ref{fig:architecture}.

Thus, the main contributions of this work are summarized as follows:
\begin{itemize}
    \item A novel parallel linear algorithm LTBs for the B-spline.
    \item The use of Blocks for the implementation of the LTBs-KAN
model.
    \item A matrix factorization for parameter reduction in the model.
    \item  A novel KAN convolutional architecture.
\end{itemize}

The effectiveness of LTBs-KAN is experimentally assessed using MNIST, Fashion-MNIST and CIFAR-10 \cite{LeCunn1998MNIST, Xiao2017FMNIST, Krizhevsky2009CIFAR10} datasets. Thus, LTBs-KAN shows good expressibility, time complexity, and reduction in parameter requirements compared to other KAN architectures. In the case of CIFAR-10, LTBs-KAN units deliver good performance against classic convolution units.
\begin{figure}[tb!]
	\centering
	\includegraphics[width=0.6\textwidth]{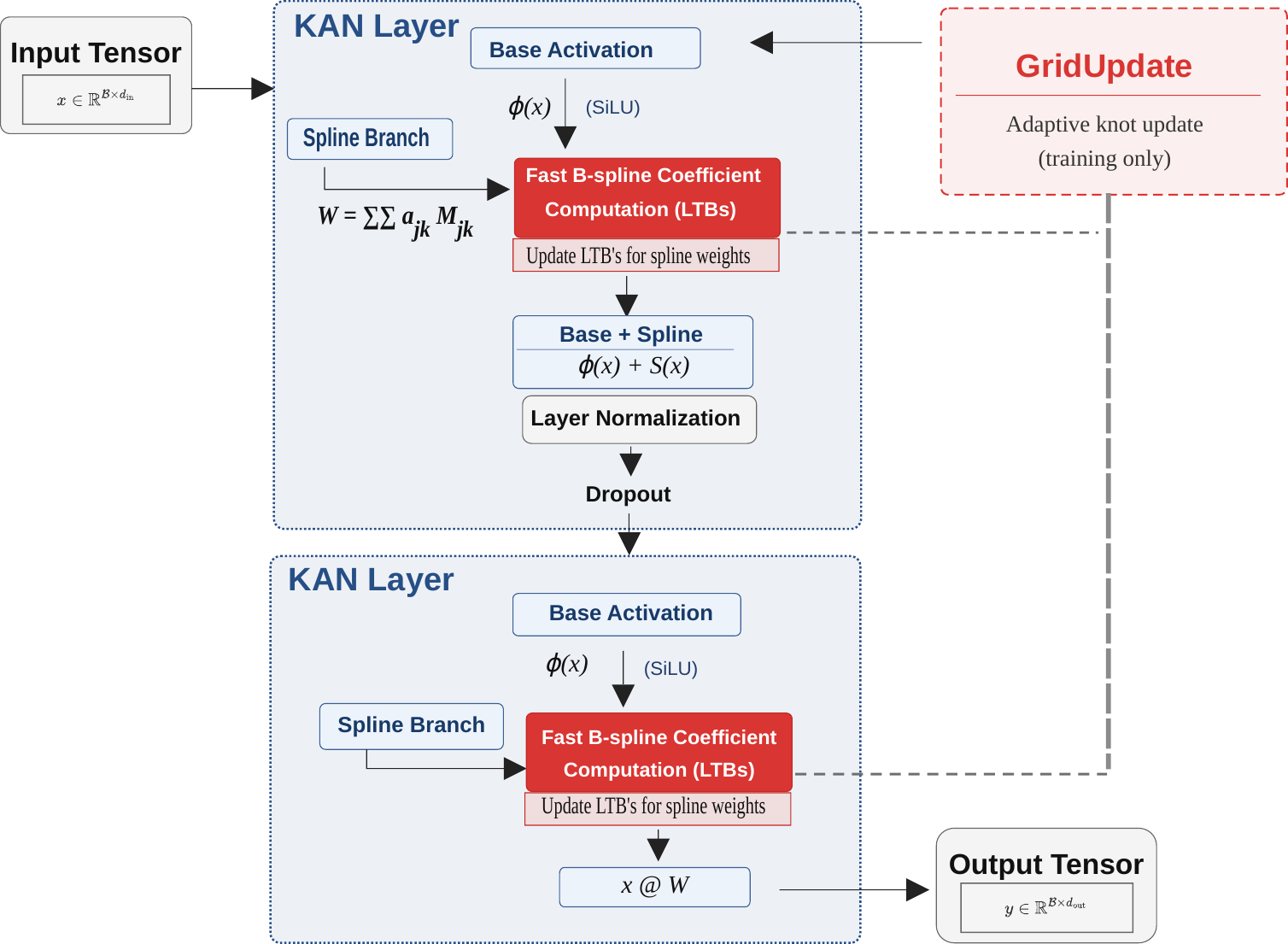}
	\caption{\label{fig:architecture} The proposed LTBs-KAN linear layer using the new LTBs Algorithm, see gridUpdate procedure \ref{algoritmo_updategrid} for details. As part of the general architecture we use dropout and layer normalization for stability and regularization purposes.}
\end{figure}

The remaining sections of this article are organized as follows. Section \ref{sec:antecedents} presents the background concepts used to define the KAN architecture and B-splines. Then, in Section \ref{section:implementation_LBTs_KAN}, our novel LBTs algorithm is introduced to efficiently compute the coefficients of the B-spline basis functions in linear time. Section \ref{secc: KAN-ConvNet} provides the implementation details of LTBs-KAN in CNNs using the proposed LTBs Algorithm. Section \ref{secc:experiments} is dedicated to the experimental evaluation of LTBs-KAN across the MNIST and Fashion-MNIST datasets. This is complemented with experiments using a convolutional architecture for LTBs-KAN on the CIFAR-10 dataset, called KAN-ConvNet. Finally, Section \ref{secc:conclusions}
summarizes the key contributions and future research.

\section{Theoretical background}
\label{sec:antecedents}
The MLP architecture is one of the basic Neural Network structures widely adopted in a range of applications \cite{lecun2015, Haykin1994NeuralNetworks}. The MLP is characterized by a feedforward architecture comprising input, hidden, and output layers, each containing one or more neurons.  Consequently, they serve as standard models in Machine Learning to approximate nonlinear functions \cite{cybenko1989approximation}. 

Deep Learning uses automatic differentiation \cite{griewank2008evaluating} for training deep architectures, resulting in "optimal" solutions that are often not easily interpretable.
This underscores the need for new fundamental units of representation in Deep Learning. The Kolmogorov-Arnold representation theorem \cite{Kolmogorov1957representation,Schmidt-Hieber2021KATRevisited} addresses this need by stating that every continuous multivariate function $f\colon [0,1]^{n}\to \mathbb {R}$, can be represented as a superposition of single variable continuous functions $\phi_{q,p}$ and $\Phi_{q}$:
 {\footnotesize
\begin{equation}\label{formula_KAT}
	f(\textbf{x}) = \sum_{q = 1}^{2n +1} \Phi_{q}\left(\sum_{p=1}^{n}\phi_{q,p}(x_p)\right).
\end{equation}}
This means that the $(2n + 1)(n + 1)$ univariate functions $\phi_{q,p}$ and $\Phi_{q}$ are enough to accurately represent a $n$ 
variable function, where $x_p$ denotes the $p$-th component of the input vector $\boldsymbol{x}\in \mathbb {R}^n$. An example of how to define a possible constructive approach can be found in \cite{Sprecher2002SpaceCurvesKolmogorov} which is time consuming and cumbersome. Thus, neural networks with a more flexible option.

\subsection{KAN architecture}

Given the input $\boldsymbol{x}\in \mathbb{R}^n$, the MLP architecture can be written as a composition of affine transformations $W$ and nonlinearities $\sigma$:
{\small{}
\begin{equation*}
MLP(\boldsymbol{x})  = (W_{L-1}\circ \sigma\circ W_{L-2}\circ\sigma \circ\cdots \circ W_{1} \circ \sigma\circ W_{0})\boldsymbol{x}.    
\end{equation*}
}
One of the main features of this architecture is that it uses fixed activation functions at its nodes, instead KANs use adaptive activation functions for computing their weight connections. For this, each weight parameter is replaced by a SiLU activation function plus a univariate function represented as a spline function \cite{liu2024kan}. Thus, KANs generalize the Kolmogorov-Arnold representation theorem \cite{Kolmogorov1957representation} by using learnable activation functions on edges. In this way, a KANs' layer with $d_\text{in}$-dimensional inputs and $d_\text{out}$-dimensional outputs is defined as: $ 
	\boldsymbol{\Phi} =  \{\phi_{q,p}\}, \quad p = 1,2,\ldots,d_\text{in}, \quad
	q = 1, 2,\ldots,d_\text{out}$, where $ \{\phi_{q,p}\}$ are learnable functions, parameterized as splines.
    The shape of a KAN architecture can be represented by an integer array $[d_0, d_1, \cdots, d_L ]$,
where $d_i$ is the number of nodes in the $i$-th layer of the computational graph. We define the $i$-th neuron in the $l$-th layer by $(l,i)$, and the activation value of the $(l,i)$-neuron by $x_{l,i}$. There are $d_{l}\cdot d_{l+1}$ activation functions between layers $l$ and $l+1$. The activation function that connects $(l,i)$ and $(l+1, j)$ is denoted by $\phi_{l,j,i},  \quad l= 0,\cdots, L-1, \quad i =1, \cdots, d_{l}, \quad j=1, \cdots, d_{l+1}$. The output of a KAN layer is given by:
{\small
\begin{equation}\label{eq_7}
	\boldsymbol{x}_{l+1,j} = \sum_{i= 1}^{d_l}\phi_{l,j,i}(x_{l,i}), \quad j = 1,\ldots, n_{l+1}.
\end{equation}
}

\noindent It can be expressed as: $\boldsymbol{x}_{l+1} = \boldsymbol{\Phi}_l\boldsymbol{x}_{l}$, where $\boldsymbol{\Phi}_l$ is the function matrix corresponding to the $l$-th KAN layer.
The shape of a deep $L$-KAN network is represented by an integer array $[d_0,d_1,\cdots,d_L]$ where $d_l$ denotes the number of neurons in the $l$-th layer. The $l$-th KAN layer, with $d_l$ input dimensions and $d_{l+1}$ output dimensions, transforms
an input vector $x_l \in \mathbb{R}^{d_l}$ to $x_{l+1} \in \mathbb{R}^{d_{l}}$:
{\small
\begin{equation}\label{eq:KAN_layer}
	\boldsymbol{x}_{l+1}=\underbrace{
		\begin{pmatrix}
			\phi_{l,1,1}(\cdot) & \phi_{l,1,1}(\cdot) & \cdots & \phi_{l,1,d_{l}}(\cdot)\\
			\phi_{l,1,2}(\cdot) & \phi_{l,2,2}(\cdot) & \cdots & \phi_{l,2,d_{l}}(\cdot)\\
			\vdots & \vdots &  & \vdots\\
			\phi_{l,d_{l+1},1}(\cdot) & \phi_{l,d_{l+1},2}(\cdot) & \cdots & \phi_{l,d_{l+1},d_{l}}(\cdot)\\
	\end{pmatrix}}_{\boldsymbol{\Phi_l}}\boldsymbol{x}_{l},    
\end{equation}
}
\noindent and the whole network is a composition of $L$ KAN layers: $f(\boldsymbol{x})= (\boldsymbol{\Phi_{L-1}}\circ\cdots\circ\boldsymbol{\Phi_{0}})(\boldsymbol{x})$. We simplify the above equation to make it  analogous to Eq. \eqref{formula_KAT} and we assume output dimension $d_L=1$, finally we define $f(x)\equiv KAN(X)$ \cite{liu2024kan} as:

{\footnotesize
\begin{equation*}
	f(\mathbf{x}) = \sum_{i_{L-1}=1}^{d_{L-1}} \phi_{L-1, i_L, i_{L-1}} \left( 
	\sum_{i_{L-2}=1}^{d_{L-2}} \cdots \left( 
	\sum_{i_1=1}^{d_1} \phi_{1, i_2, i_1} \left( 
	\sum_{i_0=1}^{d_0} \phi_{0, i_1, i_0}(x_{i_0})
	\right)
	\right)
	\right).
\end{equation*}
}

\subsection{Current KANs implementation}\label{subsec1}

The KAN architecture layer described by Eq. \eqref{eq_7} includes an activation function defined as:
{\footnotesize
\begin{equation} \label{eq:phi}
	\phi(x) = w_{b}b(x) + w_{s}\spline(x),
\end{equation}
}
where $b(x)$ is a SiLU function:
{\footnotesize
\begin{equation} \label{eq:basefunction}
	b(x) =  \silu(x) = \frac{x}{1+ \exp(-x)}.
\end{equation}
}

\noindent In addition, $\spline(x)$ is parametrized as a linear combination of B-splines such that $c_j$ are trainable constants.
{\footnotesize
\begin{equation} \label{eq_spline}
	\spline(x) \doteq \sum_{j}c_{j}B_{j}(x).
\end{equation}
}
Each activation function is initialized with $w_s =1$ and $\spline(x) \approx 0$. The value of $w_b$ is based on the classic Xavier initialization \cite{Glorot2010DifficultyTrainDeepNetworks}, which has been successfully used to initialize linear layers in MLPs for training.

\subsection{Function approximation by splines}\label{secc:splines}

In this section, we study B-spline calculation in depth. B-splines are functions that generate smooth interpolation curves, reducing the likelihood of large oscillations typical seen in high-degree polynomials \cite{Piegl1996NurbsBook}. For example, they are used in graphical and numerical methods to solve boundary-value problems for ordinary and partial differential equations. To define a B-spline function as in Eq. \eqref{eq_spline}, we need to introduce several concepts and properties before.

\begin{defi}[\cite{Chudy2023LinearTimeBSpline}, 1.3]\label{def_recu}  The generalized divided difference of a univariate function
	$f:\mathbb{R}^{d}\to \mathbb{R}$ at the knots $x_{i}, x_{i+1},\ldots,x_k $ (which may be coincident), denoted by $[x_{i}, x_{i+1},\ldots,x_k]f$, is defined recursively as:
	\begin{equation*}
	[x_{i}, x_{i+1},\ldots,x_{i+l}]f \\
	\doteq 
	\begin{cases} \frac{[x_{i+1},\ldots,x_{i+l}]f- [x_{i}, x_{i+1},\ldots,x_{i+l-1}]f}{x_{i+l}-x_{i}}, & (x_{i+l} \neq x_{i}) \\
		\frac{f^{(l)}(x_{i})}{l!} & (x_{i+l}= \ldots = x_{i})
	\end{cases}
	\end{equation*}
	in particular, $[x_i]f=\frac{f^{(0)}(x_i)}{0!}= f(x_i)$.
\end{defi}
By using the concept of divided differences, we can define the B-spline basis function recursively. This approach will ultimately enable us to define the B-spline curve accurately and comprehensively.
\begin{defi}[\cite{Chudy2023LinearTimeBSpline}, 1.4]\label{def_bspline} The B-spline basis function $N_{mi}$ of degree $m\in \mathbb{N}$ with knots
	$t_{i}\leq t_{i+1}\leq \cdots \leq t_{i+m+1}$ is defined as
    {\footnotesize
	\begin{equation*}	    
	N_{mi}(u) \doteq  (t_{i+m+1} -t_{i})[t_{i},t_{i+1}, \cdots, t_{i+m+1}](t-u)^{m}_{+},
	\end{equation*}}
	where the generalized divided difference acts on the variable $t$, and
    {\footnotesize
	\begin{equation*}
    	(x-c)^{m}_{+} \doteq 
	\begin{cases} (x-c)^{m} & (x \geq c) \\
		0 & (x < c)
	\end{cases}	    
	\end{equation*}}
	is the truncated power function \cite{Chudy2023LinearTimeBSpline}.
\end{defi}

\noindent In particular, we have that the B-spline function $B_{mi}$ with knots $x_{i}, x_{i+1},\ldots,$ $ x_{m+i+1}$ has support $[t_{i},t_{i}]$, see \cite{Boor1978Splines}. Finally, using these basis functions, the B-spline curve can be defined.
\begin{defi}
	\label{def:bspline_curve}
	A B-spline curve of degree $m$ over the non-empty interval $[a,b]\subset \mathbb{R}$ has knots: $t_{-m}\leq \ldots \leq t_{0}=a\leq t_1 \leq \cdots \leq b =t_n \leq \ldots \leq t_{m+n}$,
	and control points $W_{-m}, W_{-m+1}, \ldots, W_{n-1} \in \mathbb{R}^d$, is defined as
	
    {\footnotesize
    \begin{equation}
		S(t) \doteq \sum_{i = -m}^{n-1}N_{mi}(t)W_{i}, \quad (t\in [a,b]).
	\end{equation}}
	
	\noindent Let us note that  $S([a,b]) \subseteq conv\{ W_{-m},W_{-m+1}, \ldots, W_{n-1} \}$ and $S([t_{i},t_{i+1}]) $ $ \subseteq conv\{ W_{i-m},W_{i-m+1}\ldots , W_{i} \}$ for $0\leq i \leq n -1$, where $conv$ is the convex  hull. (See, e.g., \cite{Piegl1996NurbsBook}, Property 3.5).
\end{defi}

To calculate these basis, a commonly used method is the Boor-Mansfield-Cox formula, see \cite{Boor1972CalculatingBSplines}. This algorithm, stable in polynomial time and numerically, allows the evaluation of spline curves in B-spline form; it is an extension of De Casteljau's algorithm for Bézier curves \cite{Boehm1999CalculatingCasteljauAlgo}.

\begin{thm}[Boor–Mansfield–Cox formula]\label{lem:BMCform}
	The B-spline functions satisfy the following
	Boor–Mansfield–Cox recursion formula:
    {\small
	\begin{equation}\label{eq:12}
		N_{mi}(u) = (u-t_i)\frac{N_{m-1,i}(u)}{t_{m+i}-t_{i}}+(t_{m+i+1}-u)\frac{N_{m-1,i+1}(u)}{t_{m+1+i}-t_{i+1}}, \qquad  -m \leq i < n,
	\end{equation}}
	Additionally, for $i=0, 1,\dots, n-1$,

    {\footnotesize
	\begin{equation}\label{eq:13}
		N_{0i}(u)=\begin{cases} 1 & u \in [t_i , t_{i+1}), \\
			0 & otherwise.
		\end{cases}
	\end{equation}}

	\noindent The derivative of a B-spline function can be expressed as:

{\footnotesize
	\begin{equation}\label{eq:14}
		N_{mi}^{'}(u) = m\left(\frac{N_{m-1,i}(u)}{t_{m+i}-t_{i}}-\frac{N_{m-1,i+1}(u)}{t_{m+1+i}-t_{i+1}}\right),  \qquad  -m \leq i < n,
	\end{equation}}
\end{thm}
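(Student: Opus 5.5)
The plan is to derive all three claims directly from Definition \ref{def_bspline}, using the recursive structure of the generalized divided difference in Definition \ref{def_recu} together with the Leibniz rule for divided differences. Write the truncated power as a product, $(t-u)^{m}_{+} = (t-u)\,(t-u)^{m-1}_{+}$, and regard it as a function of $t$. The Leibniz rule for a product $g h$ with $g(t)=t-u$ linear reads
\begin{equation*}
[t_i,\ldots,t_{i+m+1}](g h) = g(t_i)\,[t_i,\ldots,t_{i+m+1}]h + [t_{i+1},\ldots,t_{i+m+1}]h .
\end{equation*}
It holds because $g$ has only two nonzero divided differences, $[t_i]g = t_i-u$ and $[t_i,t_{i+1}]g=1$. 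I would prove this rule first, by induction on the number of knots, directly from the recursion in Definition \ref{def_recu}. The coincident-knot case is covered by the ordinary Leibniz rule for derivatives, since $(gh)^{(l)}/l! = g\,h^{(l)}/l! + h^{(l-1)}/(l-1)!$.

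\textbf{Step 1 (the Boor--Mansfield--Cox recursion).} Apply the Leibniz rule with $h=(t-u)^{m-1}_+$. This gives $(t_i-u)[t_i,\ldots,t_{i+m+1}]h + [t_{i+1},\ldots,t_{i+m+1}]h$. Expand the first full divided difference with the recursive formula of Definition \ref{def_recu}, writing it as the difference of the two $m$-point divided differences divided by $(t_{i+m+1}-t_i)$, and multiply through by $(t_{i+m+1}-t_i)$. Collecting the terms in $[t_i,\ldots,t_{i+m}]h$ and $[t_{i+1},\ldots,t_{i+m+1}]h$ leaves coefficients $(u-t_i)$ and $(t_{i+m+1}-u)$. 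Converting each $m$-point divided difference back into $N_{m-1,i}/(t_{i+m}-t_i)$ and $N_{m-1,i+1}/(t_{i+m+1}-t_{i+1})$ via Definition \ref{def_bspline} yields \eqref{eq:12}. When a denominator vanishes, I adopt the standard convention $0/0=0$; the corresponding $N_{m-1,\cdot}$ is then identically zero.

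\textbf{Step 2 (the base case \eqref{eq:13}).} For $m=0$ the definition gives $N_{0i}(u)=(t_{i+1}-t_i)\,[t_i,t_{i+1}](t-u)^0_+ = (t_{i+1}-u)^0_+-(t_i-u)^0_+$. This is the indicator of $[t_i,t_{i+1})$, with the convention chosen for the value of $(x-c)^0_+$ at $x=c$ so that the interval is half-open.

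\textbf{Step 3 (the derivative formula \eqref{eq:14}).} Differentiate under the divided difference with respect to $u$. Divided differences are linear in $f$, and $\partial_u (t-u)^m_+ = -m (t-u)^{m-1}_+$. This gives $N'_{mi}(u)=-m(t_{i+m+1}-t_i)[t_i,\ldots,t_{i+m+1}](t-u)^{m-1}_+$. Applying the recursion of Definition \ref{def_recu} once turns this into $m\big([t_i,\ldots,t_{i+m}]-[t_{i+1},\ldots,t_{i+m+1}]\big)(t-u)^{m-1}_+$. Rewriting each term through Definition \ref{def_bspline} gives exactly \eqref{eq:14}.

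The main obstacle is rigor at coincident knots and at the breakpoints of $(t-u)_+$. The function $(t-u)^{m-1}_+$ is only $C^{m-2}$ in $t$, so the divided differences at repeated knots, and the interchange of $\partial_u$ with the divided difference, are only justified away from knots of high multiplicity. To handle this, I would first prove all identities for pairwise distinct knots, where everything is an explicit finite linear combination. I would then pass to coincident knots by continuity of divided differences in the knots, valid for functions of sufficient smoothness at the relevant points, treating $u$ equal to a knot by right-continuity. The derivative formula is stated away from the finitely many knots where $N_{m-1,\cdot}$ jumps.
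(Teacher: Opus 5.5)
Your proposal is correct and is the classical argument. The paper gives no proof of its own and defers to de Boor, whose derivation matches yours: the Leibniz rule for divided differences applied to $(t-u)^m_+=(t-u)(t-u)^{m-1}_+$ gives \eqref{eq:12}, and differentiating in $u$ under the divided difference, followed by one application of the recursion, gives \eqref{eq:14}. One point in Step 2 should be stated outright rather than left as an unspecified convention. Under the paper's literal convention, $(x-c)^0_+=1$ at $x=c$, the formula produces the indicator of $(t_i,t_{i+1}]$. You therefore have to replace it with $(0)^0_+=0$, which makes the B-splines right-continuous in $u$, to obtain \eqref{eq:13} as stated.
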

\begin{proof}
	The reader is referred to \cite{Boor1972CalculatingBSplines,Boor1978Splines} for the proof of this theorem.
\end{proof}
It is worth noting that the complexity of Boor–Mansfield–Cox formula (Theorem \ref{lem:BMCform}) is $O(nm^{2} + \mathcal{B}m^{2} + \mathcal{B}m^2d_{\text{in}}d_{\text{out}})$, where $m$ denotes the spline order, $n$ represents the grid size, $\mathcal{B}$ is the batch size, $d_\text{in}$ is the number of neurons, and $d_{\text{out}}$ is the number of output features. 
This is clearly too high for the complex applications being used in Deep Learning for inference and training which is the main objective of using the more expressive KAN networks.

\subsection{Decreasing float computations of the b-spline coefficients}
\label{secc:newmethod}

Recently, a novel differential–recurrence relation for B-spline functions of identical degree was established by Chudy et al. \cite{Chudy2023LinearTimeBSpline}. Based on this relation, a recursive procedure is developed to determine the coefficients of B-spline functions of degree $m$ in the Bernstein–Bézier representation with a lower number of divisions.
In \cite{Chudy2023LinearTimeBSpline}, a novel quadratic time calculation of the Bernstein-Bézier basis coefficients is presented, which allows to reduce the number of float division operations with respect to the Boor–Mansfield–Cox formula (Theorem \ref{lem:BMCform}). For this, a new series of coefficients $ b^{(i,j)}_k$ with $m$ denoting the spline degree, $ 0 \le k \le m $, $j=0,...,n-1$ and $i = j - m, j - m + 1, \ldots, j.$ All this, for the new B-spline basis functions over each of the non-trivial knot span $[t_j, t_{j+1}) \subset [t_0, t_n]$.
These new B-spline basis functions $N_{mi}(u)$
admit a Bernstein-Bezier representation of the form (Section 2, Problem 2 in Chudy et al. \cite{Chudy2023LinearTimeBSpline})

{\footnotesize
\begin{equation}
	N_{mi}(u) = \sum_{k=0}^{m}
	b^{(i,j)}_{k} B_k^m\left(
	\frac{u - t_j}{t_{j+1} - t_j}
	\right),
	\qquad
	t_j \le u \le t_{j+1},
	\label{eq:BB-representation}
\end{equation}}
where $ B_k^m(\cdot) $ denotes the Bernstein polynomial of degree $m$  in Boor–Mansfield– Cox formula \ref{lem:BMCform}
on the knot span $ [t_j, t_{j+1}) $.

According to the above problem, in \cite{Chudy2023LinearTimeBSpline} a recursive method is derived to calculate the coefficients of B-spline functions of degree $m$, in the Bernstein-Bézier form. 
For this, let $u\in [t_{j},t_{j+1}]$ and $t\doteq \frac{u-t_{j}}{t_{j+1}-t_j}$, the Bernstein–Bézier coefficients of the B-spline functions have the form:

{\footnotesize
\begin{equation}\label{eq:newcoeffbplsine}
	N_{mi}(u) \doteq  \sum_{k =0}^{m}b_{k}^{(i,j)}B_{k}^{m}(t), \quad (i = j - m, j - m + 1, \cdots , j), \end{equation}}
thus, we can define a point on a B-spline function as:

{\footnotesize
\begin{equation}\label{eq:Newspline}
	S(u) = \sum_{i= j-m}^{j}\bigg(\sum_{
		k=0
	}^{m} b_{k}^{(i,j)}B_{k}^{m}(t)\bigg)W_{i},
\end{equation}}

\noindent assuming that coefficients $b_{0}^{(i,j+1)}$ and $b_{0}^{(i+1,j)}$ $(k = 0, 1, \ldots, m)$ are of the form:

{\footnotesize
\begin{equation}\label{eq:17}
	b_{k}^{(i,j)}=\bigg(\frac{t_{j}-t_{i}}{t_{j+1}-t_i}\bigg)^{m-k}b_{0}^{(i,j+1)} + \sum_{l=0}^{m-k-1}\bigg(\frac{t_{j}-t_{i}}{t_{j+1}-t_i}\bigg)^{l}\frac{v_i}{t_{j+1}-t_i}q_{k+l},
\end{equation}
}
where $q_{l} \doteq (t_{j+1}-t_{m+i+2})b_{l}^{(i+1,j)} + (t_{m+i+2}-t_{j})b_{l+1}^{(i+1,j)}$
and $0\leq j \leq n-2$, $j-m+1 \leq i \leq j-1$. Hence, the coefficients of $N_{m,i}$ for $j = 0, 1, \ldots, n - 1$ and 
$i = j - 1, j - 2, \ldots, j - m + 1$ are fully characterized. In accordance with Chudy et al. \cite{Chudy2023LinearTimeBSpline}, they satisfy the recursive differential relation given in Theorem $3.1$ (Section 3) of the mentioned work.

The complexity of this approach is now $O(nm^{2} + \mathcal{B}m
+ \mathcal{B}md_{\text{in}}d_{\text{out}})$, and although the asymptotic computational complexity of the proposed method may appear comparable to that of the approach based on Boor–Mansfield–Cox formula \ref{lem:BMCform}, a more detailed analysis of the floating-point operation count reveals a substantial difference. Specifically, the proposed method requires only $O(\mathcal{B}m)$ division operations, while the method derived in Boor–Mansfield–Cox formula \ref{lem:BMCform} involves $O(\mathcal{B}m^2)$ divisions. Since division operations are typically more computationally expensive and less numerically stable than additions and multiplications, this reduction is significant in practical implementations. Consequently, the proposed approach eliminates $O(\mathcal{B}m^2)$ division operations, leading to a tangible improvement in computational efficiency.

\section{Parallel computation of b-spline coefficients}
\label{section:implementation_LBTs_KAN}

Now, when implementing the recurrence relation Eq. \eqref{eq:17}, an algorithm is proposed that corresponds to the approach introduced in Theorem 4.4 presented in \cite{Chudy2023LinearTimeBSpline}, that calculates the coefficients of the adjusted Bernstein-Bézier form of the B-spline functions. 
This algorithm  returns a sparse array 
\begin{equation}\label{eq:sparsearray}
	B \doteq B[0..n -1, -m..n - 1, 0..m],
\end{equation}
\noindent with $B[j,i, k] = b_{k}^{(i,j)}$  (Eq. \ref{eq:17}), for $0 \leq j < n$, $-m \leq i < n$ and $0 \leq k \leq m$,  as stated in Section 5 of \cite{Chudy2023LinearTimeBSpline}. Next, let us consider that a usual choice for the boundary knots is to make them coincident with $t_0$ and $t_n$, i.e.,
\begin{equation}
	t_{-m} = t_{-m+1} = \cdots = t_{-1} = t_{0} = a, \quad b = t_{n} = t_{n+1} = \cdots = t_{n+m}.    
\end{equation}
In this case, $S(a) = W_{-m}$ and $S(b) = W_{n-1}$. Please refer to Sections 3 and 4 in \cite{Chudy2023LinearTimeBSpline} for details on the demonstrations. 

In this section, we present the details of our PyTorch-based implementation\footnote{The code and implementation of LTBs-KAN can be found at \url{https://github.com/LalosMerin}} of the LTBs-KAN model. This flexible and high-performance framework is selected for developing the proposed LTBs Algorithm \ref{algoritmo} because it provides tensor computation with GPU acceleration, allowing to significantly accelerate floating number operations by parallel computation. The LTBs-KAN model, as outlined in Algorithm \ref{algoritmo}, implements the core concepts presented in the previous section. Furthermore, it incorporates the recurrence relation derived for the adjusted Bernstein-Bézier coefficients of B-spline functions of the same degree, thereby enhancing the implementation's computational efficiency.

It is important to note that LTBs-KAN can be extended to various knot multiplicities, thereby speeding up the computation of B-spline basis functions, and the evaluation of curves and surfaces. Consequently, this method can be applied to an adjusted Bernstein–Bezier or power basis to efficiently compute the coefficients of polynomial functions $N_{mi}$ within each knot span.

\subsection{The block concepts}
\label{secc:blockconcepts}

In this part, we will show the practical application of the algorithm described in Section \ref{secc:newmethod} (Chudy et al.) in the implementation of our proposed LTBs Algorithm in parallel computing. The new method consistently performs faster because the parallel implementation (see Section \ref{linearKAN:complexity} for this) than evaluating B-spline functions using the recurrence relation Eq. \eqref{eq:17}. 
Next, we consider a vector knot $t = (t_0,\dots,t_{n+2m}) \in \mathbb{R}^{n+2m+1}$, 
and by applying the method described by Chudy et al. \cite{Chudy2023LinearTimeBSpline}, the following tensor is obtained $\mathrm{C} \in \mathbb{R}^{\,n \times (n+m) \times (m+1)},$
which represent the polynomial coefficients associated with the underlying B-spline basis on the specified grid, see Eqs. \eqref{eq:17} and \eqref{eq:newcoeffbplsine}.  Next, we will introduce the logic design of LTBs Algorithm \ref{algoritmo}, which is divided into the following two blocks concepts:

\paragraph{Block 1 Boundary initialization.}
For each cell index $j = 0,\dots,n-1$, the method computes endpoint coefficients of degree $0$ and $m$ using expressions of the form $(t_{j+1+m} - t_{j+m})^{m-1},$
followed by a sequence of normalizing divisions by knot differences, based on the value of $t$ in Eq. \eqref{eq:newcoeffbplsine} . These steps correspond to the classical recursive construction of endpoint polynomials of B-splines. This is implemented in  lines $6-28$
in  LTBs Algorithm \ref{algoritmo}.
\paragraph{Block 2 Backward recursion.}
For $i$ decreasing from $n-2$ to $n-m-1$, and for all $k = m-1,\dots,0$, the
algorithm updates $\mathrm{C}(n-1,i+m,k)$ as

\begin{equation}\label{eq:backwardblock2}
	\mathrm{C}(n-1,i+m,k)
	=
	c_1\mathrm{C}(n-1,i+m,k+1)
	+
	c_2\mathrm{C}(n-1,i+1+m,k+1),
\end{equation}

\vspace*{-3mm}
\begin{algorithm}
	\caption{LTBs Algorithm: Parallel calculation of the coefficients of the adjusted Bernstein-Bézier form of B-spline functions}
	\label{algoritmo}
    \begin{footnotesize}
	\begin{algorithmic}[1]
		\State \textbf{Input} $n, m \in \mathbb{N}$ \Comment{Number of basis functions and spline degree}
		\State \textbf{Input} $\text{knots} \in \mathbb{R}^{n+m+1}$ \Comment{Knot vector}
		\State \textbf{Output} $\text{C} \in \mathbb{R}^{n \times (n+m) \times (m+1)}$ \Comment{Coefficient tensor updated  in-place}
		
		\State $\epsilon \gets 10^{-8}$ \Comment{Numerical stability constant}
		
		\Statex \Comment{\textbf{Block 1: Forward initialization of diagonal terms}}
		\For{$j = 0$ \textbf{to} $n-1$}
		\If{$j + m + 1 < \text{len}(\text{knots})$}
		\State $\Delta \gets \max(\text{knots}[j+m+1] - \text{knots}[j+m], \epsilon)$
		\State $\text{num} \gets \Delta^{m-1}$
		
		\Statex \Comment{Compute upper diagonal term $\text{C}[j, j+m, m]$}
		\State $\text{val}_m \gets \text{num}$
		\For{$k = 2$ \textbf{to} $m$}
		\If{$j + k + m < \text{len}(\text{knots})$}
		\State $\text{d} \gets \max(\text{knots}[j+k+m] - \text{knots}[j+m], \epsilon)$
		\State $\text{val}_m \gets \text{val}_m / \text{d}$
		\EndIf
		\EndFor
		\State $\text{C}[j, j+m, m] \gets \text{val}_m$
		
		\Statex \Comment{Compute lower diagonal term $\text{coeffs}[j, j, 0]$}
		\State $\text{val}_0 \gets \text{num}$
		\For{$k = 2$ \textbf{to} $m$}
		\State $\text{idx} \gets j + 1 - k + m$
		\If{$0 \leq \text{idx} < \text{len}(\text{knots})$}
		\State $\text{d} \gets \max(\text{knots}[j+1+m] - \text{knots}[\text{idx}], \epsilon)$
		\State $\text{val}_0 \gets \text{val}_0 / \text{d}$
		\EndIf
		\EndFor
		\State $\text{C}[j, j, 0] \gets \text{val}_0$
		\EndIf
		\EndFor
		
		\Statex \Comment{\textbf{Block 2: Backward recurrence for remaining coefficients}}
		\For{$i = n-2$ \textbf{downto} $n-m-1$}
		\If{$(i + m) < \text{len}(\text{knots})$ \textbf{and} $(n + m) < \text{len}(\text{knots})$}
		\State $c_1 \gets \dfrac{\text{knots}[n-1+m] - \text{knots}[i+m]}{\text{knots}[n+m] - \text{knots}[i+m] + \epsilon}$
		\State $c_2 \gets \dfrac{\text{knots}[n+m] - \text{knots}[n-1+m]}{\text{knots}[n+m] - \text{knots}[i+1+m] + \epsilon}$
		
		\For{$k = m-1$ \textbf{downto} $0$}
		\State $\text{C}[n-1, i+m, k] \gets c_1 \cdot \text{C}[n-1, i+m, k+1]$
		\State $\quad + c_2 \cdot \text{C}[n-1, i+1+m, k+1]$
		\EndFor
		\EndIf
		\EndFor
		\State \Return $\text{C}$
	\end{algorithmic}
    \end{footnotesize}
\end{algorithm}
\noindent where
\begin{equation*}
	c_1 = \frac{t_{n-1+m}-t_{i+m}}{t_{n+m}-t_{i+m}}, \quad\text{and}\quad
	c_2 = \frac{t_{n+m}-t_{n-1+m}}{t_{n+m}-t_{i+1+m}},    
\end{equation*}

\noindent this result is a complete tensor representation of the local spline polynomial pieces according to the value of $b_{k}^{(i,j)}$ in Eq. \eqref{eq:17}. This is implemented in lines $29-38$ at LTBs Algorithm \ref{algoritmo}.

Therefore, we are ready to explain the new implementation of this LTBs Algorithm in a new KAN architecture called LTBs-KAN, which improves the performance of this architecture compared to other KAN models.

\begin{algorithm}[tb!]
	\caption{gridUpdate Procedure: recalculation of knots and coefficients }
	\label{algoritmo_updategrid}
    \begin{footnotesize}
	\begin{algorithmic}[1]
		
		\Require Input matrix $X \in \mathbb{R}^{\mathcal{B} \times d_{in}}$
		\Require Margin parameter $\delta$
		\Require Grid size $n$, spline order $m$
		
		\State $x_{min} \gets \min(X)$
		\State $x_{max} \gets \max(X)$
		
		\State $span \gets \max(x_{max}-x_{min}, 10^{-3})$
		
		\State $low \gets x_{min} - \delta \cdot span$
		\State $hi \gets x_{max} + \delta \cdot span$
		
		\State $grid \gets \text{linspace}(low, hi, n + 2m + 1)$
		
		\State Update internal knot vector with $grid$
		
		\State $knots \gets grid$
		
		\State Initialize coefficient tensor
		$\mathrm{C} \in \mathbb{R}^{n \times (n+m) \times (m+1)} $
		
		\State $\text{LTBs}(n,m,knots, \mathrm{C})$
		
		\State $o_{fill} \gets \min(d_\text{out},n)$
		\State $i_{fill} \gets \min(d_\text{in},n+m)$
		\State $d_{fill} \gets \min(D,m+1)$
		
		\If{$o_{fill} > 0$ and $i_{fill} > 0$ and $d_{fill} > 0$}
		\State Copy compatible block
		\[
		W_{spline}[0:o_{fill},0:i_{fill},0:d_{fill}]
		\gets
		\mathrm{C}[0:o_{fill},0:i_{fill},0:d_{fill}]
		\]
		\EndIf
	\end{algorithmic}
    \end{footnotesize}
\end{algorithm}

\subsection{Adaptive knot and grid updates}
\label{LTBs-KAN:AdaptiveKnotGridUpdates}

Now, in this section we will make use of the implementation of the LTBs Algorithm explained in the concepts of the previous Section \ref{secc:blockconcepts}. Thus, we take a detailed look at the grid Update procedure (Algorithm \ref{algoritmo_updategrid}) in LTBs-KAN. Given a batch $X$, the knot vector is adapted to the empirical range of the data by setting the following bounds:
\begin{equation}
	\mathrm{low} = x_{\min} - \delta(x_{\max}-x_{\min}), 
	\quad \text{and} \quad
	\mathrm{hi} = x_{\max} + \delta(x_{\max}-x_{\min}),  
\end{equation}
and then recomputing the grid as a uniform subdivision of $[\mathrm{low},\mathrm{hi}]$. The new spline coefficients are obtained by the recursion Eq. \eqref{eq:17}, and the corresponding sub-block is copied into the tensor $\mathrm{C}\left(o,i,d\right)$.
Only the compatible portion of $\mathrm{C}$ is overwritten, which preserves the asymptotic computational complexity of the layer. Therefore, the method \text{GridUpdate} adapts the spline knot vector to the current range of input data. Let us consider
{\footnotesize
\begin{equation}
	x_{\min} = \min(x), \quad \text{and} \quad x_{\max} = \max(x),
\end{equation}}
and define
{\footnotesize
\begin{equation}
	\mathrm{low} = x_{\min} - \delta(x_{\max}-x_{\min}),\quad \text{and} \quad
	\mathrm{hi} = x_{\max} + \delta(x_{\max}-x_{\min}),
\end{equation}}
where $\delta$ is a marginal parameter, using these equations we ensure a stable and well-defined spline representation, the layer
constructs a new knot vector $\hat{t} = \operatorname{linspace}(\mathrm{low}, \mathrm{hi}, n + 2m + 1)$,  where the function $\operatorname{linspace}$ generates a uniformly spaced sequence of $n + 2m + 1$ knots over the interval $[\mathrm{low}, \mathrm{hi}]$. The additional
$2m$ knots provide the necessary boundary multiplicity to ensure that B-splines of order $m$ are properly defined at the domain boundaries. Once constructed, the vector $\hat{t}$ is internally stored within the layer and serves as the reference knot configuration for all subsequent spline computations. In particular, the evaluation of the basis functions and the recurrence used to compute the adjusted Bernstein–Bézier coefficients (LTBs Algorithm \ref{algoritmo}) both depend on this fixed knot structure. This approach ensures that the spline activation component of the LTBs-KAN layer remains constant during training, leaving only the spline coefficients and the linear parameters to be updated. Subsequently, new spline coefficients are computed using the Algorithm \ref{algoritmo} which are stored in the tensor $\mathrm{C}(o,i,d)$.
As a result, only the compatible sub-block is overwritten, thereby preserving computational complexity and ensuring numerical reliability, see Eq. \eqref{eq:backwardblock2}.

\subsection{Parameters reduction}
\label{secc:parametersmodel}

The computational complexity of a deep neural network is often very costly, therefore, differente deep learning techniques to address these problems have been proposed in the literature. Pruning for example, is used to reduce the number of parameters of a trained Neural Network while preserving the original performance. This is because many parameters contribute little or nothing to the final inference made by the Neural Network. In this section, we shall reduce the parameters of LTBs-KAN model using an alternative method to the traditional affine linear transformation in MLPs, with $\boldsymbol{y}=\boldsymbol{x}W^\top + \boldsymbol{b}$, where $\boldsymbol{x}\in \mathbb{R}^{d_\text{in}} $ is the input vector, $W$ is the weights matrix, and $b$ is the bias. We propose using sum-product decompositions of matrices for the forward step. The main idea behind this is to reduce the full matrix $W\in \mathbb{R}^{d_\text{in} \times\ d_\text{out}}$, where $d_\text{in}$ is the number of input features per data sample and $d_\text{out}$ is the number of output features produced by the transformation, with a parametrized version that has fewer parameters than the number of entries \cite{Wu2019ProdSumNet}. 

Let us denote the batch size as $\mathcal{B}$ and the input and output dimensions as $d_{\mathrm{in}}$ and $d_{\mathrm{out}}$, respectively. Also, let $n$ be the grid size, $m$ the spline order, and fixed values $p, s \in \mathbb{N}$. We build a linear operator $W : \mathbb{R}^{\mathrm{d_{in}}} \to \mathbb{R}^{\mathrm{d_{out}}}$, defined as:

\begin{equation}\label{eq:W_weights}
	W \doteq \sum_{j=1}^{p}
	\sum_{k=1}^{s}
	a_{jk} M_{jk},
\end{equation}
where each $a_{jk} \in \mathbb{R}$ is a learnable scalar parameter and each  
$M_{jk} \in \mathbb{R}^{d_\text{in} \times d_\text{out}}$ is a fixed matrix stored as a buffer, i.e., in tensor form.  Thus, $W\subseteq \mathcal{S}$, where $
	\mathcal{S}= \mathrm{span}\{ M_{jk} : 1 \le j \le p,\; 1 \le k \le s \}.$
Let us note that the operator applied to a batch $X \in \mathbb{R}^{\mathcal{B} \times \mathrm{d_\text{in}}}$ is:
\begin{equation}
	XW =\sum_{j=1}^{p} \sum_{k=1}^{s}
	a_{jk} (X M_{jk}),    
\end{equation}
this shows that the learnable linear map is a weighted sum of the fixed linear transformations $X \mapsto XM_{jk}$. 

Each layer of the LTBs-KAN architecture combines these steps: an optional grid update step using the current batch, a factorized linear operator belonging to a prescribed finite-dimensional subspace of $\mathcal{L}(\mathbb{R}^{\mathrm{in}},\mathbb{R}^{\mathrm{out}})$, a spline interaction term defined by tensor contraction with a coefficient tensor and a normalization step. Thus, LTBs-KAN architecture implements a deep composition of structured linear operators and spline interactions, with adaptive knot geometry (Section \ref{LTBs-KAN:AdaptiveKnotGridUpdates}) driven by the data. 

In the original KAN, the Output of each layer decomposes into a linear component and a nonlinear spline component:
{\footnotesize
\begin{equation}
	\mathrm{Output}
	=
	\underbrace{\mathrm{baseOutput}}_{\text{linear term}}
	+
	\underbrace{\text{splineOutput}}_{\text{nonlinear spline term}}.
\end{equation}}
The  term $\text{baseOutput}$ represents the classic linear transformation that forms the basis of the layer. Therefore, given an input batch $X \in \mathbb{R}^{\mathcal{B} \times d_\text{in}}$, the $\text{baseOutput}$ is defined as
{\footnotesize
\begin{equation*}
\mathrm{baseOutput}(X)\doteq
\phi(X)\, W= \sum_{j,k} a_{jk} (X M_{jk}),    
\end{equation*}}
where $\phi$ is a base activation function (e.g.,\ SiLU) and
$W \in \mathbb{R}^{d_\text{in} \times d_\text{out}}$ is the effective linear weight matrix Eq. \eqref{eq:W_weights}. The tensor of spline coefficients is $N \equiv \text{spline weight} \in \mathbb{R}^{d_\text{out} \times d_\text{in} \times D},$ $D \doteq n + m.$
For each input-output pair $(i,o)$, this model defines a univariate spline
\begin{equation}
	\text{spline}_{o,i}(x_i)
	\doteq \sum_{j=1}^{D} \mathrm{C}_{o,i,j}\, B_j(x_i),
\end{equation}
where $\{B_j\}$ is the B-spline basis associated with the current knot vector.
The corresponding batch operator is the contraction

{\footnotesize
\begin{equation}
	N_{b,o}
	=
	\sum_{i=1}^{d_\text{out}}
	\sum_{j=1}^{D}
	X_{b,i}\, C_{o,i,j},
\end{equation}}
implemented as an Einstein summation, according Eq. \eqref{eq:newcoeffbplsine}.
Thus, the spline weight tensor $N$ encodes the coefficients of the local polynomial
representation which defines the nonlinear functional interaction between inputs and outputs. Finally, combining a structured linear operator with a spline-based functional term the layer output can be expressed as follows: 
\begin{equation}
	\boldsymbol{y} = \mathrm{Layer Normalization}\left(XW +\sum_{i,j} X_{b,i} \mathrm{C}_{o,i,j}
	\right).   
\end{equation}

\subsection{
	Computational complexity of LTBs-KAN}
\label{linearKAN:complexity}

As in Section \ref{secc:parametersmodel}, let us consider an input $\boldsymbol{x}\in \mathbb{R}^{d_\text{in}}$  along with a network layer characterized by an input dimension $d_\text{in}$ and an output dimension $d_\text{out}$. Let $m$ represent the spline order and $n$ denote the grid size of a function 
in a KAN. The number of control points (also the number of basis functions) required is $m+n$. The total number of parameters, including the weight matrix and the bias term, when passing $\boldsymbol{x}$ through a KAN layer is:

\begin{equation*}
	KAN_{params} = \underbrace{d_\text{in}  d_\text{out} (m+n)}_{\text{weight matrix params}} \quad + \quad d _\text{out}.   
\end{equation*}  



\noindent Furthermore, we have an $L$-layer KAN with width $\text{N}$ (which means each layer has $\text{N}$ neurons), there are in total $O(\text{N}^{2}L(m+n))\approx O(n\text{N}^2L)$ parameters, where $m$ is the spline order and $n$ the number of grid nodes. Contrarily, an MLP with depth $L$ and width $\text{N}$ typically requires $O(\text{N}^2L)$ parameters, suggesting that it might be more parameter-efficient than a KAN. However, KANs often operate effectively with a much smaller $\text{N}$ than MLPs. This is because if $m$ (spline degree) is small $(m= 2, 3,5)$, it can be seen as a constant. This not only reduces parameter count, but also enhances generalization and facilitates interpretability \cite{Ta2025PRKAN}. 

\subsubsection{Liner-time complexity of  b-spline functions by tensor forms}
\label{secc:LTCspline}
Therefore, in this section, we explain the idea for calculating the LTBs-KAN computational complexity. Consider that we are calculating the forward pass complexity, a list $[d_0, d_1, \dots, d_L]$, containing the input and output dimension of each KAN layer, can be defined, such that for layer $i$ we have the input dimension $d_i$ and the output one $d_{i+1}$
which specifies a sequence of $L$ layers in LTBs-KAN with mappings $d_0 \to d_1, d_1 \to d_2, \dots, d_{L-1} \to d_L$. Each layer in LTBs-KAN contains two sets of trainable parameters, a standard linear weight matrix, $\text{base weight} \in \mathbb{R}^{d_i \times d_{i-1}}$ and a spline-based weight tensor, $\text{spline weight} \in \mathbb{R}^{d_i \times d_{i-1} \times D}$, where $D = n + m$. Therefore, each layer contributes with $d_i  d_{i-1}  (1 + D)$
parameters, and the total number of trainable parameters in the network can be expressed as:
\begin{equation}
	P_{\text{total}} = \sum_{i=1}^{L} d_i  d_{i-1}  (1 + D).
\end{equation}
Thus, the computational complexity of the forward pass for a batch size $\mathcal{B}$, which, for a layer, includes the following components: $SiLU$ activation, $O(\mathcal{B} d_\text{in})$; residual addition, $O(\mathcal{B} d_\text{out})$; and the layer normalization, $O(\mathcal{B} d_\text{out})$. If $p, s, D$ are small positives constants, i.e.,  $m=2,3$, with $s=2,3$ and $p=4,5$, then each layer $i$ requires $T^{i}_{layer} = O\left( \mathcal{B} d_{i-1}  d_i \, (1 + D) \right) \approx O(\mathcal{B}  d_\text{in}  d_\text{out}  D)$,
operations due to the linear transformation and the spline-based tensor contraction. Therefore, the total forward-pass computational complexity is
\begin{equation}
	T_{\text{total}} = 
	O\left( 
	\mathcal{B} \sum_{i=1}^{L} d_{i-1}  d_i  (1 + D)
	\right).    
\end{equation}
Furthermore, based on the automatic differentiation theorem \cite{Baydin2018AutomaticDI}, the backward pass has the same computational complexity.

Now, we analyze the complexity of the layer components, and of a complete network built with them. Using Eq. \eqref{eq:W_weights}, the KAN layer updates the weight matrix by using the following expression $W \leftarrow W + a_{jk} M_{jk}$, which requires scaling and adding a $d_\text{in} \times d_\text{out}$ matrix.

On the other hand, the grid update operation involves computing the global minimum and maximum values, resulting in a computational cost of $O(\mathcal{B}d_\text{in})$ and the Knot recomputation has a complexity of $O(D)$. 
The LTBs Algorithm computes all fitted Bézier 
B-spline coefficients in a single pre-calculation step, requiring 
\begin{equation}\label{eq:complexbezier}
	O(nm + m^{2}) = O(nm),    
\end{equation}
time when $n \gg m$, and only $O(1)$ additional working memory beyond the coefficient tensor.
After this pre-computation, each B-spline evaluation is reduced to evaluating a polynomial of degree $m$, which can be performed in $O(m)$ time. In contrast, the Cox-De Boor formula (Theorem \ref{lem:BMCform}) incurs a computational cost of $O(nm^2)$. This approach replaces the quadratic dependence on the spline degree with a linear one, at the expense of an initial $O(nm)$ pre-computation. Consequently, the overall spline complexity per layer in the LTBs model is: $T_{\text{spline}} = O(\mathcal{B} d_\text{in}  d_\text{out}  D)$.

The next step involves copying coefficients into the spline weight, which has a computational cost of $O(d_\text{out} d_\text{in} D)$. Therefore, updating the grid requires 
\begin{equation}\label{eq:complTgridupdate}
	T_{\text{gridUpdate}}= O(\mathcal{B} d_\text{in} + n m + d_\text{out}  d_\text{in}  D),
\end{equation} 
operations. Notably, this operation is not performed during every forward pass. The total computational complexity of the LTBs-KAN model can be calculated as follows. For a network with widths $d_0, d_1, \dots, d_L$, the total complexity is:
\begin{equation*}
	T_{\text{network}}
	= \sum_{i=1}^{L} 
	O\left( d_{i-1}  d_i \, (p s + \mathcal{B}(1 + D)) \right).    
\end{equation*}

If all layers have a width of approximately value $d$, and we consider $L$ and $d$ constants, while  $m$, $p$, and $s$ are small positives constants, i,e. $m=2,3, 5$, with $p= 2,3$ and $s=4,5$. Then,
\begin{equation}
	T_{\text{LTBs-KAN}}
	\approx O\left( L d^2 (ps + \mathcal{B}(1 + D)) \right) \approx O(n).    
\end{equation}

Finally, the proposed factorization mainly enhances the model in terms of parameter efficiency rather than reducing the per-inference computational cost. In contrast, LTBs Algorithm \ref{algoritmo} and the alternative approach described in Eq. \eqref{eq:W_weights} provide improvements in both complexity and parameter usage compared to the traditional KAN formulation. In particular, they enable the grid-dependent spline computations to be shifted to a preprocessing stage with cost $O(n m)$, avoiding the repeated expense of $O(\mathcal{B} d_\text{in} n  m)$ operations during each forward pass.

\subsection{Parallel complexity analysis}

Next, we analyze the work $T_1$ and $T_\infty$ of the LTBs Algorithm under the standard Parallel Computation Model (PRAM) \cite{cook1986upper, fortune1978parallelism,blelloch1990PrefixSums}. The work $T_1$ corresponds to the
total number of operations executed sequentially, while the span $T_\infty$ represents the length of the critical path in the computation DAG (Directed Acyclic Graph), i.e.,  the time
required when the algorithm is executed with an unbounded number of processors. The ratio $T_1/T_\infty$ measures the theoretical parallelism of the algorithm. 
Several tasks in the LTBs-KAN layer require parallel operations such as computing sums, minima, maxima, dot products, or normalization
statistics that gets their best complexity by the use of parallel balanced reduction trees \cite{blelloch1990PrefixSums}. And although the implementation does not explicitly construct reduction trees,
the dominant tensor operations are executed by PyTorch on accelerated backends
such as CUDA, where calculations are queued asynchronously and delegated to
backend libraries optimized for linear algebra and BLAS operations
\cite{pytorch_cuda_semantics}.
Furthermore, PyTorch explicitly notes, in the context of associative scanning, that
associative computations can be parallelized using a tree reduction algorithm
rather than being executed sequentially \cite{pytorch_associative_scan}. Therefore,
in PRAM analysis, it is appropriate to model the underlying reductions
that appear in matrix multiplications, tensor contractions, and normalization statistics
using balanced reduction trees, resulting in a logarithmic complexity. Moreover, let us consider parallel operations over $\mathcal{M}\in \mathbb{N}$ elements, 
at each parallel step pairs of elements are combined,
halving the number of active values. After $k$ steps the number of remaining
elements becomes $\mathcal{M} / 2^k$. The parallel operations terminate when a single value remains,
which occurs when $\mathcal{M} / 2^k = 1$, yielding $k = \log_2 (\mathcal{M})$. Therefore, although the total work of parallel operations remains $O(\mathcal{\mathcal{M}})$, the span of the operation becomes $T_\infty = O(\log \mathcal{M})$.

Therefore, this logarithmic span appears repeatedly in the complexity analysis of the LTBs-KAN model, since procedures such as matrix multiplications, tensor contractions, and normalization layers involve parallel operations over
large vectors, see \cite{blelloch1990PrefixSums,cormen2009}.


As we saw in Section \ref{section:implementation_LBTs_KAN}, the LTBs Algorithm \ref{algoritmo} computes the tensor $\mathrm{C} \in \mathbb{R}^{n} \times (n+m) \times (m+1)$, where $n$ is the grid size and $m$ is the spline order. Thus, the sequential work required to compute all coefficients is $T_1 = O(n m + m^2)$, since typically $n \gg m$, the dominant term becomes (see Eq. \ref{eq:complexbezier}) in $T_1 = O(n m)$.  Therefore, the span is determined by the recurrence relations involved in the polynomial representation of the spline basis, which introduce dependencies proportional to the spline degree. Consequently, $T_\infty = O(m^2)$.
The resulting theoretical parallelism is therefore

{\footnotesize
\begin{equation}
	\frac{T_1}{T_\infty}
	=
	O\left(\frac{nm + m^2}{m^2}\right)
	=
	O\bigg(\frac{n}{m} + 1\bigg).    
\end{equation}}

\noindent When $n \gg m$, the LTBs Algorithm \ref{algoritmo} exhibits significant parallelism,
approximately $O(n/m)$, we can see more details about complexity in parallel algorithms in \cite{cormen2009}.

In this part, the grid update stage involves computing the minimum and maximum of the input tensor, generating a new knot grid, recomputing spline coefficients, and copying compatible tensor blocks. Thus computing the minimum and maximum of a tensor of size $\mathcal{B} d_\text{in}$ requires
parallel operations, and therefore has span $T_\infty = O(\log(\mathcal{B}  d_{\text{in}}))$, moreover, computing spline coefficients contributes an additional
$O(m^2)$ to the critical path. Therefore, the span of the gridUpdate procedure is

{\footnotesize
\begin{equation}
	T_\infty(\text{gridUpdate})
	=
	O(\log(\mathcal{B} d_\text{in}) + m^2).
\end{equation}}

The total work of this procedure is: $T_1(\text{gridUpdate}) =
O(\mathcal{B} d_\text{in} + nm + d_\text{out} d_\text{in} D)$,  Eq. \eqref{eq:complTgridupdate}. In the forward pass complexity, each LTBs-KAN layer performs a linear transformation, spline-based tensor
contraction, element-wise activation, and layer normalization. Next, the sequential work of the forward pass with batch size $\mathcal{B}$ is $T_1(\text{forward})=
	O(\mathcal{B}  d_\text{in} d_\text{out} D)$. In the forward execution element-wise operations such as SiLU activations have constant complexity not interfering in the complexity calculations for the span. Now, 
the construction of the factorized weight matrix involves operations
over $ps$ matrices, producing  a span of $O(\log(ps))$ matrix multiplications 
, while the spline
tensor contraction requires $d_\text{in} D$ operations and layer
normalization requires $d_\text{out}$ operations. Thus, the span of the forward pass is

\begin{equation}
	T_\infty(\text{forward})
	=
	O(\log(p s) + \log(d_\text{in} D) + \log(d_\text{out})).
\end{equation}

\noindent Finally, for a network with $L$ layers and widths
$d_0, d_1, \dots, d_L$, the total sequential work becomes

{\footnotesize
\begin{equation}
	T_1(\text{network})
	=
	O\left(
	\mathcal{B} \sum_{i=1}^{L} d_{i-1} d_i (1 + D)
	\right).
\end{equation}}

\noindent If all layers have approximately the same width $d$, this simplifies to $T_1 = O(L  \mathcal{B} d^2  D)$. Thus, 
the span only grows logarithmically with the layer dimensions, $T_\infty = O(\log(d\ D))$. In conclusion, the theoretical parallelism of the architecture is 

{\footnotesize
\begin{equation}
	\frac{T_1}{T_\infty}
	=
	O\left(
	\frac{L  \mathcal{B} d^2  D}{\log(d D)}
	\right).
\end{equation}}

\noindent This result indicates that the LTBs-KAN model admits substantial
parallelism and can efficiently exploit modern parallel hardware such as
GPUs and TPUs.


\section{KAN-ConvNet: the LTBs algorithm in convolutional neural networks}
\label{secc: KAN-ConvNet}

Convolutional Neural Networks (CNN's), first introduced by LeCun et al.~\cite{lecun2015}, are widely used in moderns architectures of deep learning due to their effectiveness in processing high-dimensional structured data, particularly image processing. These architectures typically employ convolutional layers consisting of linear transformations followed by nonlinear activation functions. 

However, in recent years, there has been increasing interest in incorporating advanced mathematical frameworks into deep learning architectures to enhance their expressive power and learning capabilities. Among these developments, as we mentioned earlier, we have KANs.  Motivated by these advances, in this section we investigate the extension of KAN-based architectures to convolutional layers, which are a fundamental component of CNN's widely used in computer vision applications. Conventional CNN's rely on fixed activation functions combined with linear convolutional operations. While effective, this formulation may limit the flexibility of the learned feature transformations. By incorporating spline-based convolutional operators, similar to the approach introduced in SplineCNN by Fey and Lenssen~\cite{fey2018splinecnn}, neural networks can achieve a richer representation of nonlinear relationships within structured data. In particular, we apply the LTBs-KAN model in  each spline-based convolutional operator.

\subsection{KAN-Conv2D: spline-based convolution}
\label{section:KAN-Conv2D}

First, we explain the KAN-Conv2D layer, this architecture  extends the concept of standard convolution by replacing the linear kernel mapping with a LTBs-KAN transformation. Instead of computing a dot product between a kernel and a local input patch, this layer applies a shared LTBs-KAN to each spatial patch. This preserves translational equivariance while increasing local expressiveness. Given an input tensor $\mathbf{x} \in \mathbb{R}^{\mathcal{B} \times C_{\text{in}} \times H \times W}$,  where $\mathcal{B}$ denotes the batch size, $C_\text{in}$ the number of input channels, and $(H, W)$ the spatial dimensions, i.e., Height  $(H)$ and Width $(W)$. The operation proceeds through next steps.

First, we have a Patch Extraction, then an \text{unfold} (im2col) operation, thus local tensor's windows are rearranged as $X_{\text{patch}} \in \mathbb{R}^{\mathcal{B} \times (C_{\text{in}}k_Hk_W) \times L}$, where $k_H$ and $k_W$ are the kernel dimensions, and $L = H_{\text{out}} W_{\text{out}}$ is the total number of output positions. Therefore, we apply a Patch Flattening, where  each local patch is reshaped into a vector of dimension $K = C_{\text{in}} k_H k_W$, we have then $P \in \mathbb{R}^{(\mathcal{B} L) \times K}$.  Therefore, we have spline transformation for each flattened patch as $Y = f_{\text{LTBs-KAN}}(P),$ where $f_{\text{LTBs-KAN}} : \mathbb{R}^{K} \rightarrow \mathbb{R}^{C_{\text{out}}}$ is composed of layers with spline activations i.e.,  B-splines that approximate nonlinear relationships between input patch elements and output responses. As a result, we have a spatial reassembly. The resulting outputs are reshaped to recover the spatial structure $Y \in \mathbb{R}^{\mathcal{B} \times C_\text{out} \times H_{\text{out}} \times W_{\text{out}}},$
with the output dimensions given by

{\footnotesize
\begin{equation*}
	H_{\text{out}} = \left\lfloor \frac{H + 2p_H - d_H(k_H-1) - 1}{s_H} + 1 \right\rfloor,
	W_{\text{out}} = \left\lfloor \frac{W + 2p_W - d_W(k_W-1) - 1}{s_W} + 1 \right\rfloor,
\end{equation*}}
where $(p_H,p_W)$, $(d_H,d_W)$  and $(s_H,s_W)$, denote padding, dilation, and stride for $H$ and $W$ respectively \cite{dumoulin2016conv}. Now, we compare with standard convolution $(*)$. Then, we have a conventional convolutional layer: $Y = \mathrm{\phi}(W * X + b)$, where $W$ is a linear kernel and $\phi$ is an activation function, in this work we use $\text{SiLU}$  (Eq. \ref{eq:basefunction}). In contrast, the KAN-Conv2D layer computes $Y = \mathrm{f_{\text{KAN}}}(X_{\text{patch}})$, where $f_{\text{KAN}}$ is a nonlinear spline operator applied identically across all spatial patches. Thus, we consider advantages like enhanced local expressiveness. Therefore, we have interpretability, where the spline formulation allows analysis of how each patch dimension contributes to the output. Finally, compatibility, i.e., the operator supports the same hyperparameters (stride, padding and dilation) as standard \text{Conv2d}.

Finally, the operator reduces exactly to a standard linear convolution implemented via \text{im2col + GEMM} if KAN has no hidden layers, thus constituting a direct generalization of the classical convolution.

\begin{figure}[htb!]
	\centering
	\includegraphics[width=.65\textwidth]{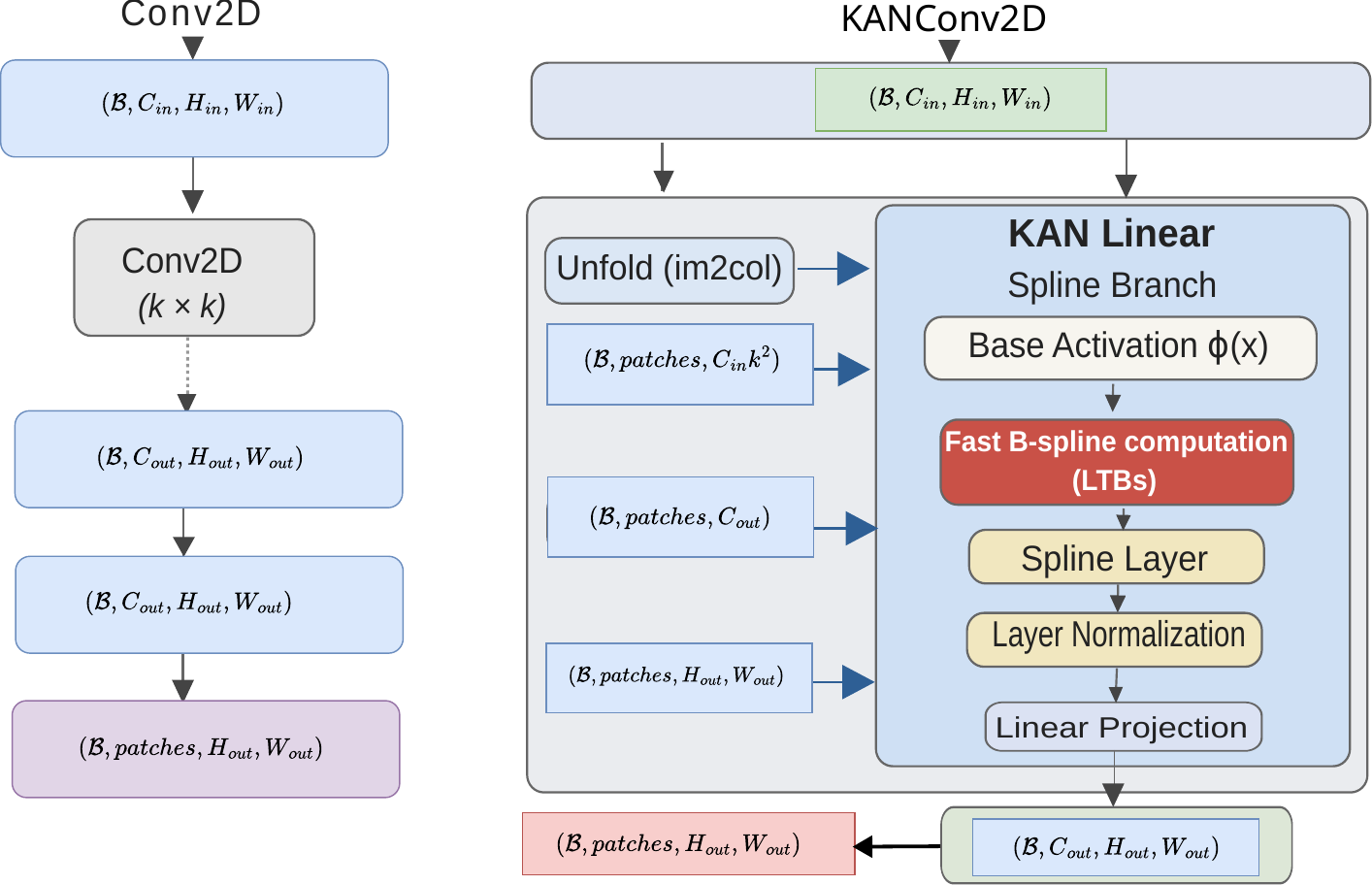}
	
	\caption{\label{fig:KANConvNet} Architecture of the Conv2D. Arquitecture Conv2D in LTBs-KAN,  generalized convolution replaces the filter-patch product with a shared KAN linear function. LTBs-KAN architecture is modified into a Convolutional Network for application to the CIFAR-10 dataset.}
\end{figure}

There are other implementations of the KAN Convolutional architecture \cite{LIU2026105983, LOU2026114405, WANG2026113042}. However, we find some issues with these implementations: For example, KAConvNet and KUNet  \cite{ LIU2026105983, LOU2026114405} embeds Kolmogorov-Arnold representations directly into Convolutional Kernels with linear and $3^{rd}$ degree splines. Instead, LTBs-KAN-ConvNet applies a nonlinear KAN-based mapping after patch extraction separating spatial aggregation from functional approximation. In another example, SpectralKAN \cite{WANG2026113042}, improves efficiency by reducing activation functions using Efficient-KAN. Instead, LTBs-KAN-ConvNet obtains efficiency through an optimized spline formulation and structured parameterization, avoiding redundancy and specialized pipelines. Given the previous descriptions, we take the decision of comparing Convolutional EfficientKAN against the LTBs-KAN-ConvNet to cover KUNet and SpectralKAN. Given that we are not able to find an implementation of KAConvNet, we are writing a new work to implement all those variations and compare them against LTBs-KAN-ConvNet in the future.

\section{Experiments}
\label{secc:experiments}

In this Section, we investigate the performance of the proposed model (LTBs-KAN), and for this, we conducted a series of experiments using MNIST, Fashion-MNIST  datasets \cite{LeCunn1998MNIST, Xiao2017FMNIST} and finally CIFAR-10 dataset \cite{Krizhevsky2009CIFAR10}. In the datasets MNIST and Fashion-MNIST the training procedure is conducted over multiple  epochs. During each epoch, a randomly chosen subset of training data is selected, referred to as a minibatch. In this case, each minibatch contains $64$ images. For MNIST and Fashion-MNIST datasets, we use a network architecture structure $[28\times 28, 32, 10]$, i.e., $784$ input neurons, $32$ hidden neurons, and $10$ output neurons. Finally, we experimented with the dataset CIFAR-10, this dataset consists of $60,000$, $32\times32$ colour images in $10$ classes, with $6,000$ images per class. The experiments with CIFAR-10 \cite{Krizhevsky2009CIFAR10} dataset, consisting of $10,000$ of samples of $32\times 32$ RGB images in $10$ classes, $6,000$ per class, we have the classic ratio of $90\%$ for training and $10\%$ test are chosen. Finally,  a $5$- fold cross-validation is applied to obtain the average results. 
Finally, using the Convolutional  architecture proposed in \cite{Krizhevsky2012ImageNet}, we generate a similar architecture by  substituting the Conv2D layers by the new proposed KAN-ConvNet, Fig. \ref{fig:KANConvNet}. The final network architecture structure in CIFAR-10 experiments can be seen in Fig. \ref{fig:CifarKANArchitecture}.

The way in which this separation between the training dataset and the test dataset has been carried out in these three experiments prevents the models from simply memorizing the training inputs and instead encourages generalization
to new and unknown data. We compare its performance with the results of the other models mentioned in Section \ref{Introduction}. In these experiments, we use $m=3$ as the spline order, the grid size $n=8$ and $\silu$ as activation function. In addition, we apply regularization techniques such as layer normalization \cite{ba2016layernormalization} and dropout \cite{Srivastava2014Dropout} with the probability of retaining a hidden unit being $0.1$ to avoid overfitting the models. 

\begin{figure}[H]
	\centering
	\includegraphics[width=0.75\textwidth]{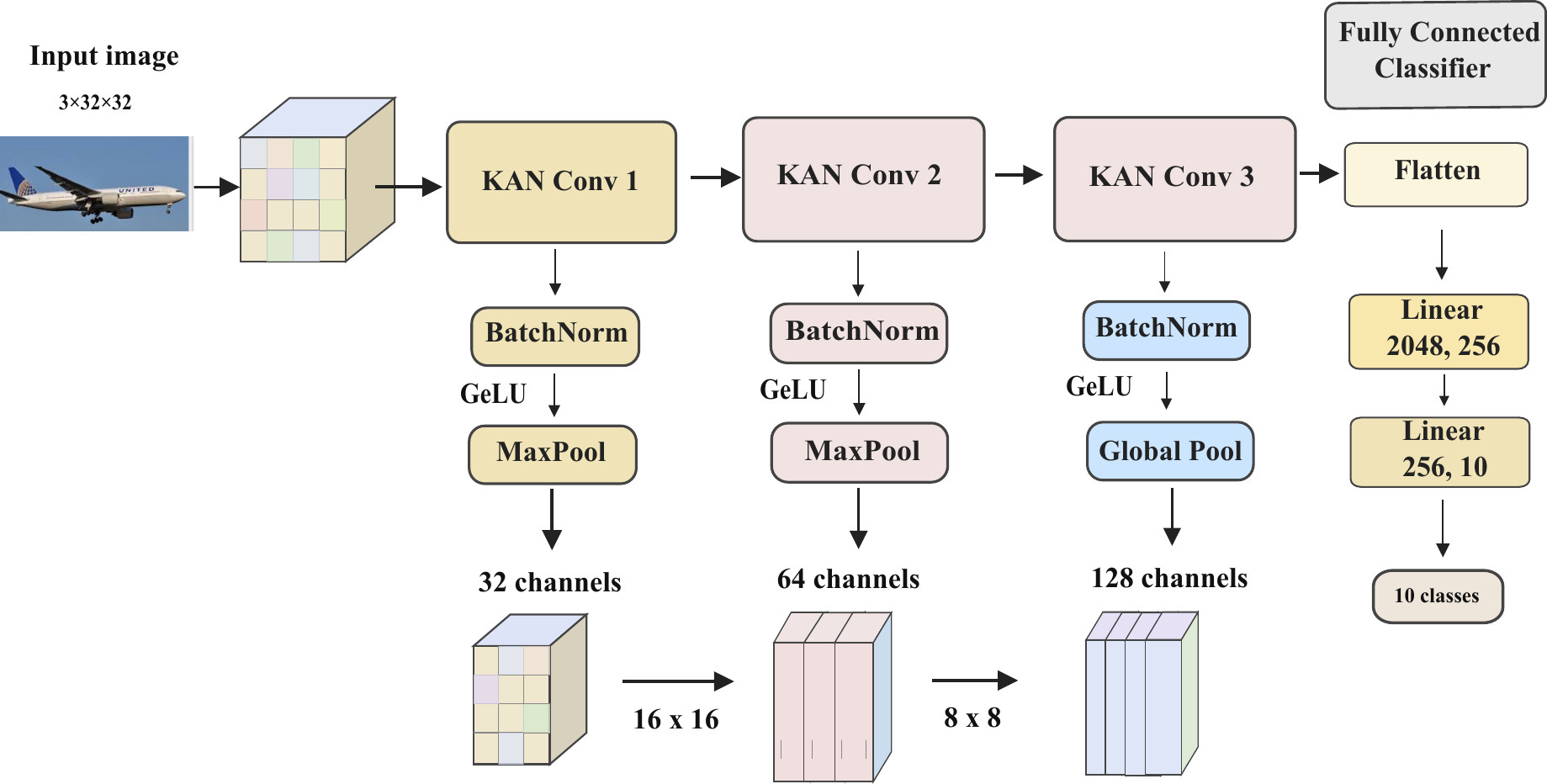}
	\caption{\label{fig:CifarKANArchitecture} LTBs-KAN-ConvNet architecture used in this experiment. The Max Pooling layers are done after every Convolutional Layer.}
\end{figure}

In addition, layer normalization  accelerates convergence during training. Furthermore, in this work we use the Kaiming initialization method, also known as Kaiming He initialization \cite{kaiming2015Rectifiers}, which can be used instead of Xavier initialization \cite{Glorot2010DifficultyTrainDeepNetworks}.
All the other hyperparameters are the same in all models, including batch size = $64$, learning rate = $\text{1e-3}$, weight decay =\text{1e-4}, the optimizer is AdamW \cite{loshchilov2017adamw} and the Adjusts the Learning Rate (LRScheduler  during optimization). 

Moreover, we employ CrossEntropy as loss fucntion in all models, this criterion computes the cross entropy loss between input logits and target.
Hyperparameter selection was guided by preliminary tuning experiments, with the objective of optimizing performance across the evaluated datasets, and  in the  sum-and-product decompositions of matrices for parameter reduction,  we use the values of $p=3$, $s=5$ for MNIST dataset and for Fashion-MNIST and CIFAR-10 datasets we apply $p=4$ and $s=8$.
Finally, the metrics used to evaluate the performance of the models in these experiments are: accuracy, precision, recall, and F1-score. The results have been obtained on a computer with $7$ 3700X processor with NVIDIA GeForce GT 710 graphics and $16$ GB of RAM (Used for MNIST and Fasion-MNIST datasets) and T4 GPU with no
more than 16 Gb memory available (used for CIFAR-10 dataset experiments).

\subsection{Experiments with MNIST}\label{secc:exp_MNIST}

We begin the experiments with MNIST dataset, we train all the models seen in Section \ref{Introduction} including our proposed model the LTBs-KAN.

\begin{table}[tb!]
\centering
\caption{
Comparison between LTBs-KAN and other KAN models over MNIST dataset after $15$ epochs. 
}\label{tab:MNIST}
\begin{threeparttable}
\footnotesize
\begin{tabular}{lcccccc}
\toprule
\textbf{Model} & \textbf{Params} & \textbf{Acc} & \textbf{Prec} & \textbf{Rec} & \textbf{F1} & \textbf{Time (s)} \\
\midrule
LTBs-KAN     & 203,378          & 0.9632          & 0.9631          & 0.9627          & 0.9628          & 244.27 \\
GottliebKAN  & \textbf{105,859} & 0.9581          & 0.9575          & 0.9578          & 0.9576          & 263.17 \\
BSRBF-KAN    & 306,528          & 0.9423          & 0.9437          & 0.9414          & 0.9419          & 517.56 \\
EfficientKAN & 254,080          & 0.9438          & 0.9489          & 0.9444          & 0.9448          & 366.65 \\
FastKAN      & 230,346          & 0.9619          & 0.9618          & 0.9613          & 0.9614          & 259.13 \\
MLP          & 643,538          & \textbf{0.9695} & \textbf{0.9692} & \textbf{0.9692} & \textbf{0.9692} & \textbf{194.66} \\
FasterKAN    & 204,896          & 0.9602          & 0.9599          & 0.9595          & 0.9597          & 237.57 \\
\bottomrule
\end{tabular}
\end{threeparttable}
\end{table}

\begin{figure}[tb!]
	\centering
	\includegraphics[width=0.55\textwidth]{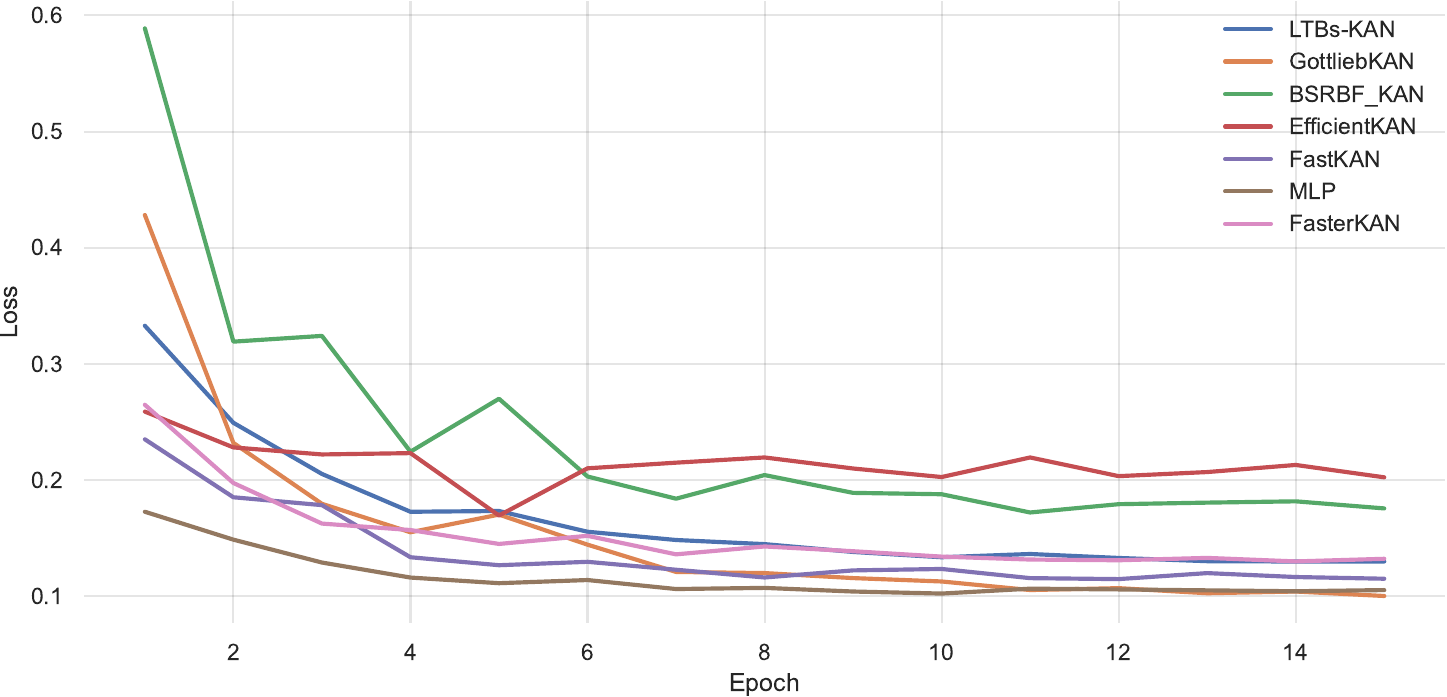}
	\caption{Test loss values during a training run of the different models after 15 epochs over MNIST dataset. The LTBs-KAN, outperforms EfficientKAN,  Gottlieb-KAN, BSRBF-KAN and FasterKAN.}
	\label{fig:MNIST_compartive}
\end{figure}

LTBs-KAN shows good results after 15 epochs of training, at testing, as it can be seen in Table \ref{tab:MNIST} against most popular KAN based architectures. However, it is necessary to point out that the MLP still has the best results with 643,538 parameters. Nevertheless, the LTBs-KAN architecture maintains good performance using the parameter reduction method described in Section \ref{secc:parametersmodel}.  

Although the LTBs-KAN architecture does not achieve the best loss reduction, Fig. \ref{fig:MNIST_compartive}, in addition of not having the best accuracy, precision, and recall when compared to MLP, the differences are quite minimal as for example for the precision metric is in the range of 0.0061. However, compared in training speed with respect to all the KAN architecture only FasterKAN is faster by seven seconds. Clearly, MLP is still faster given that it is not dealing with recurrence to calculate the splines.

Finally, we need to point out that even though the MLP gets the best performance, the LTBs-KAN architecture has only 203,378 parameters compared with the full MLP with 643,538 parameters. A model with less parameters is the GottliebKAN architecture, but LTBs-KAN architecture has better metrics due to the fact that splines have higher expressibility given their piecewise flexibility \cite{Boor1978Splines}. This makes the LTBs-KAN suitable for environments with scarce resources keeping with the expressibility capabilities of the KAN architecture \cite{liu2024kan}. Therefore, it is clear that the proposed LTBs-KAN model has the best scores for the KAN architectures by balancing speed and the final number of parameters. It is more, the training time for the MNIST dataset, confirms the computational complexity of the proposed parallel algorithm, Section \ref{linearKAN:complexity}. 

Therefore, the model demonstrates good performance  on the MNIST dataset, achieving both precision and recall, due to the methods described in the previous sections. Finally, in Section \ref{secc:experimentsCIFAR10}, it is possible to see a confirmation on  the utility of the new LTBs-KAN model for real life applications. 

\subsection{Experiments with Fashion-MNIST} \label{secc:exp_FashionMNIST}

In the second part of the experiments, we trained LTBs-KAN with the Fashion-MNIST dataset over 20 epochs before testing. All models are trained under the same hyperparameter configuration with results presented in Table \ref{tab:Fashion_MNIST}. In addition, the LTBs-KAN architecture  is trained with reduction parameters of $p=4$ and $s=8$. Here, we can see that the LTBs-KAN is not the fastest model during training, see Column Time, being surpassed by FastKAN, FasterKAN, GottliebKAN and MLP. Here, it is possible to see that BSRBF-KAN, FastKAN, FasterKAN and MLP obtain better metrics.

\begin{table}[!tb]
\centering
\caption{
The comparison between LTBs-KAN and other KANs, with best metric values in training runs over the Fashion-MNIST dataset after 20 epochs.
}\label{tab:Fashion_MNIST}
\begin{threeparttable}
\footnotesize
\begin{tabular}{lcccccc}
\toprule
\textbf{Model} & \textbf{Params} & \textbf{Acc} & \textbf{Prec} & \textbf{Rec} & \textbf{F1} & \textbf{Time (s)} \\
\midrule
LTBs-KAN     & 203,412          & 0.8744          & 0.8735          & 0.8744          & 0.8736          & 378.84 \\
GottliebKAN  & \textbf{105,859} & 0.8671          & 0.8659          & 0.8671          & 0.8658          & 350.43 \\
BSRBF-KAN    & 306,528          & 0.8747          & 0.8740          & 0.8747          & 0.8738          & 693.55 \\
EfficientKAN & 254,080          & 0.8536          & 0.8575          & 0.8536          & 0.8532          & 486.40 \\
FastKAN      & 230,346          & \textbf{0.8850} & \textbf{0.8843} & \textbf{0.8850} & \textbf{0.8843} & 327.27 \\
MLP          & 643,538          & 0.8773          & 0.8765          & 0.8773          & 0.8766          & 256.42 \\
FasterKAN    & 204,896          & 0.8831          & 0.8827          & 0.8831          & 0.8828          & 313.64 \\
\bottomrule
\end{tabular}

\end{threeparttable}
\end{table}

\begin{figure}[tb!]
	\centering
	\includegraphics[width=0.55\textwidth]{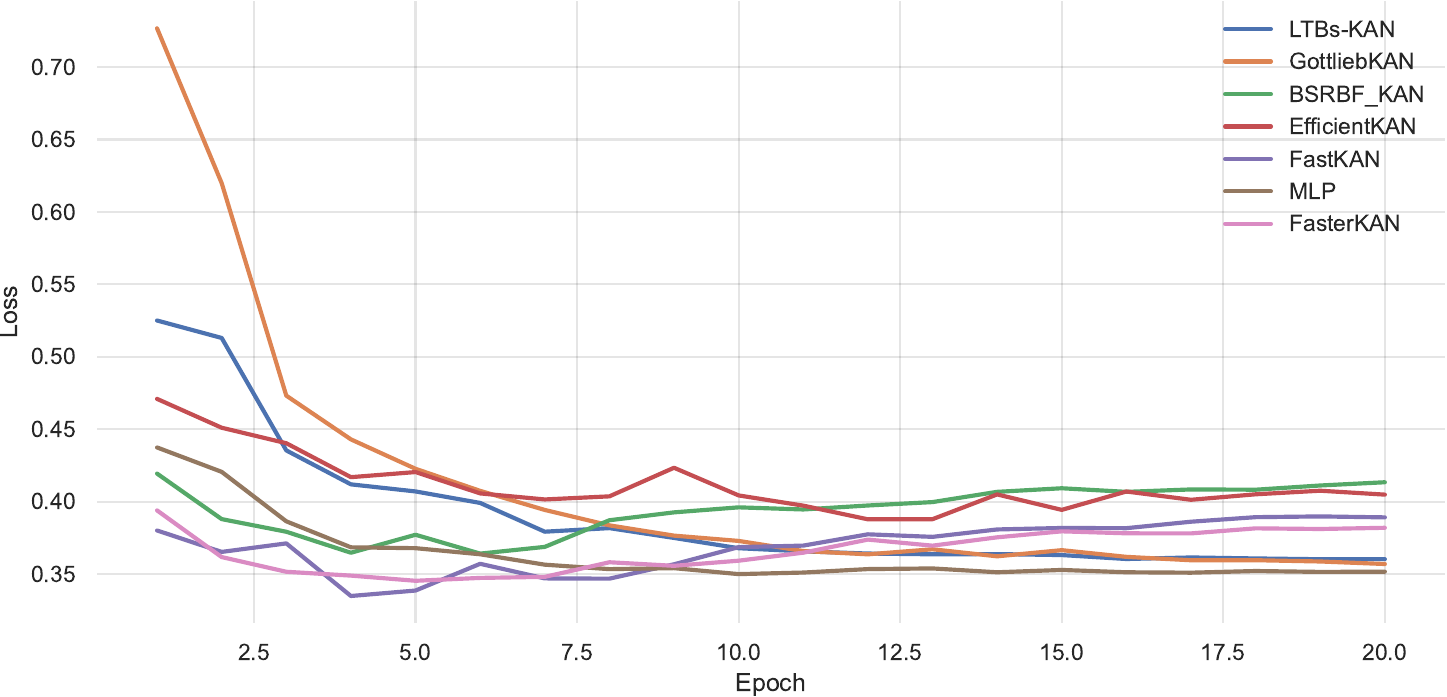}
	\label{fig:FMNIST}
	\caption{Test loss values during a training run of the different models over $20$ epochs using the Fashion-MNIST dataset. We see that the LTBs-KAN has the nearest loss convergence to the MLP convergence at epoch 20. }
\end{figure}

One way to explain these results is the basic fact that BSRBF-KAN, FastKAN, FasterKAN and MLP have more parameters than the LTBs-KAN architecture. In addition, FastKAN,
and FasterKAN
use a linear function as spline which allows to reduce complexity during training improving the learnability of the information in the data set. This is a phenomenon that happens in learning known as the "bias-variance dilemma" \cite{hastie2009elements} which points out to the need to increase the size of the dataset when using more complex functions for learning. This hypothesis surges from the fact that 
BSRBF-KAN and LTBs-KAN have almost the same performance in the metrics (with a small difference of 0.0003) using a spline of degree three which is more complex than a simple linear function at the other architectures including MLP.

This points out to the fact that it is necessary to introduce a regularization into the degree of the spline to control de complexity of the spline when is necessary to have a less complex learning model. This is a future work in the research of the LTBs-KAN architecture.

\subsection{Experiments with CIFAR-10}\label{secc:experimentsCIFAR10}

For the experiment with the CIFAR-10 dataset, three different architectures are proposed: the fully connected MLP, a simple AlexNet using convolutional layers and the LTBs-KAN ConvNet architecture. For this last architecture, we used the proposed KAN-Conv2D layer which is defined in Section \ref{section:KAN-Conv2D}. 

The test losses for the three models can be seen in Fig. \ref{fig:test_loss_convergence_cifar10} during 20 training epochs. It is clear that LTBs-KAN ConvNet converges as fast as the EfficientKAN and faster than the other two architectures during the training, reaching their best loss in epoch 8.

\begin{table}[tb!]
\centering
\caption{
Comparison of three models over the CIFAR-10 dataset in time training for $20$ epochs with training time in seconds, after applying dropout and layer normalization.
}
\label{table:CIFAR-10}

\begin{threeparttable}

\footnotesize
\begin{tabular}{lcccccc}
\toprule
\textbf{Model} & \textbf{Params} & \textbf{Acc} & \textbf{Prec} & \textbf{Rec} & \textbf{F1} & \textbf{Time} \\
\midrule
LTBs-KAN ConvNet* & 1,538,954 & 0.8179 & 0.8173 & 0.8179 & 0.8174 & 956.83 \\
LTBs-KAN ConvNet** & 2,811,210 & 0.8254 & 0.824123 & 0.8254 & 0.824431 & 1734.60 \\
Simple AlexNet & 2,474,506 & 0.8208 & 0.820917 & 0.8208 & 0.820806 & 419.10 \\
MLP & 9,656,323 & 0.5144 & 0.510917 & 0.5144 & 0.512068 & 341.12 \\
EfficientKAN-ConvNet & 1,457,802 & 0.8222 & 0.822073 & 0.8222 & 0.821975 & 3298.76\\
\bottomrule
\end{tabular}
\end{threeparttable}
\end{table}
\begin{figure}[tb!]
	\centering
	\includegraphics[width=0.55\textwidth]{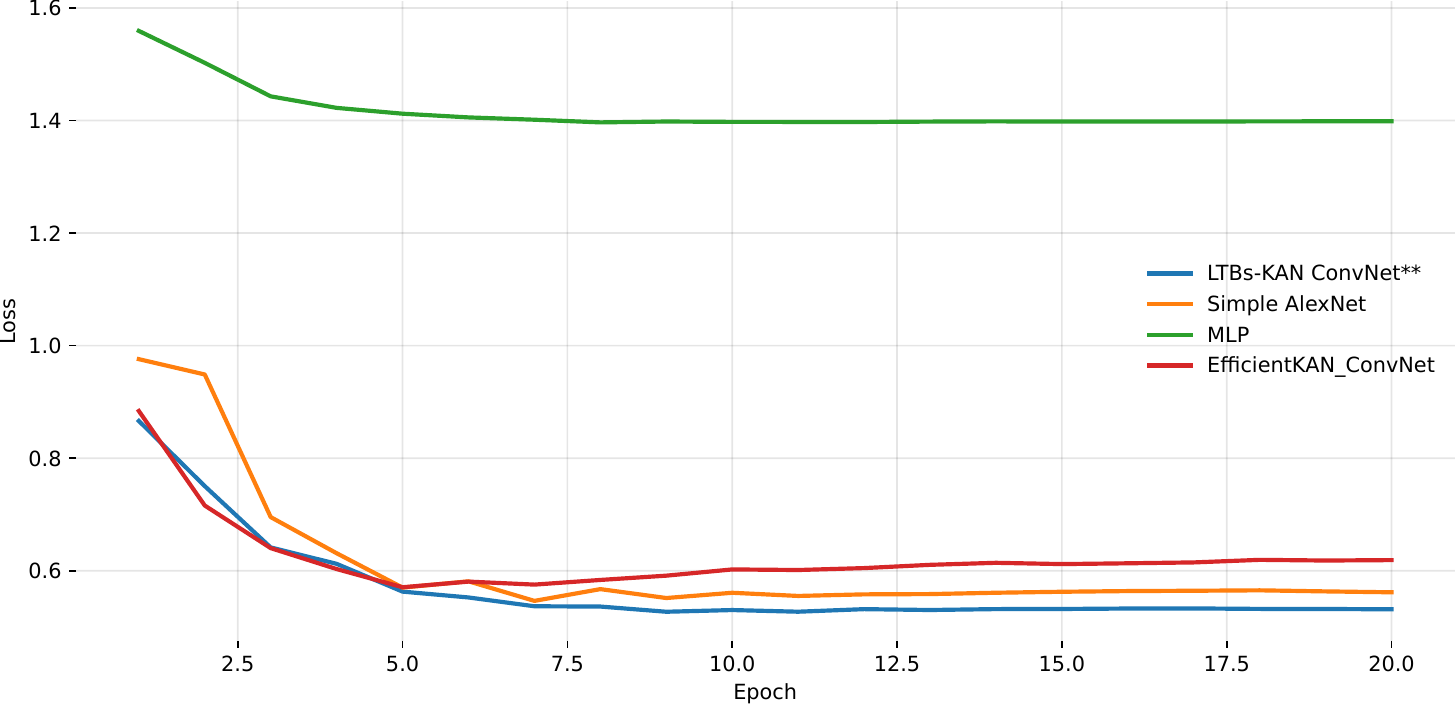}
\caption{\label{fig:test_loss_convergence_cifar10} Loss during a training run of 20 epochs over the CIFAR-10 dataset for MLP, AlexNet, EfficientKAN-ConvNet and LTBs-KAN ConvNet. LTBs-KAN ConvNet* uses the parameters $p = 4$ and $s = 8$ and LTBs-KAN ConvNet** uses $p= 12$ and $s =12$}
\end{figure}

Our results on CIFAR-10 are summarized in Table \ref{table:CIFAR-10}. The LTBs-KAN ConvNet** architecture outperforms the other two models in all the metrics of accuracy, precision, recall and F1. 

Although EfficientKAN-ConvNet is the architecture with the fewest number of parameters (1,457,802) compared to  LTBs-KAN ConvNets, it is possible to see that against LTBs-KAN ConvNet* there is a difference of 0.0043, for example, at accuracy with 3.4 times faster execution time.

Finally, although the time complexity of LTBs-KAN ConvNet is three times higher than the one in fully connected MLP and two times higher than that of the AlexNet architecture, it is clear that the expressibility of the KAN helps the LTBs-KAN ConvNet extract more information from the dataset to obtain better performance than the two other architectures.

\section{Conclusions}\label{secc:conclusions}

We introduce LTBs-KAN model to compute the basis of the B-splines with time complexity  $O\left( \log\left( dD\right) \right)$. This efficiency comes from tree complementary mechanisms: First, the use of tensor parallelism computation. Second, the dense linear transformation is replaced by a factorized sum of fixed matrices with only a small set of trainable coefficients. Third, the use of  adaptive knot and grid updates to improve structural consistency and time efficiency. Using LTBs algorithm, we define a new KAN layer for architectures to test against three data sets: MNIST, Fashion-MNIST and CIFAR-10. 

In the first data set, MNIST, the performance of the LTBs-KAN is the best when compared to the other KAN architectures with the second best number of parameters only surpassed by the MLP. In the second data set, Fashion-MNIST, BSRBF-KAN, FastKAN and FasterKAN surpass the proposed LTBs-KAN architecture. In these, the performance difference is a factor of the complexity of the data vs the complexity of the LTBs-KAN. Then, raising the need to develop a regularization on the spline degree for model complexity.

At the CIFAR-10 data set, the proposed KAN-Conv2D layers allowed the LTBs-KAN ConvNet architecture to have the best scores in the CIFAR-10 data set. Not only that, but the LTBs-KAN ConvNet is smaller, only EfficientKAN-ConvNet is smaller,of the four architectures tested at CIFAR-10 with respect to the number of parameters. 

Finally, in future work, we will continue to explore the combination of KAN and their components, as regularization and sparsity promotion, for the LBTs-KAN model. 


\end{document}